\numberwithin{equation}{section}
\numberwithin{equation}{section}
\newcommand{\norm}[1] {\left \| #1 \right \|}
\newcommand{\inclu}[0] {\ar@{^{(}->}}
\newcommand{\diag}{{\rm diag}}
\newcommand{\RR}{\mathbb{R}}
\newcommand{\rank}{\mathrm{rank}}
\newcommand{\fnorm}[1]{\| #1 \|_{ {\tiny \text{F}}}}
\newcommand{\floor}[1]{\left \lfloor #1 \right \rfloor }
\newcommand{\argmin}{\operatornamewithlimits{argmin}}
\newcommand{\dotp}[1]{\left\langle #1\right\rangle}
\newtheorem{theorem}{Theorem}[section]
\newtheorem{lemma}{Lemma}[section]
\newtheorem{proposition}[theorem]{Proposition}
\newtheorem{corollary}[theorem]{Corollary}
\newtheorem{assumption}{Assumption}
\theoremstyle{remark}
\newcommand{\cmark}{\ding{51}}%
\newcommand{\xmark}{\ding{55}}
\newcommand{\size}{\rho}
\newcommand{\sizei}{c_\rho}
\title{Algorithmic Regularization in Model-free Overparametrized Asymmetric Matrix Factorization}
\author{Liwei Jiang\thanks{School of Operations Research and Information Engineering, Cornell University. Ithaca, NY 14850, USA; lj282@cornell.edu}
\qquad
Yudong Chen\thanks{Department of Computer Sciences,
University of Wisconsin-Madison. Madison, WI 53706, USA;
 yudong.chen@wisc.edu} 
\qquad 
Lijun Ding\thanks{Wisconsin Institute for Discovery, University of Wisconsin-Madison, Madison, WI 53706, USA;  lding47@wisc.edu}}
\begin{document}

\maketitle

\begin{abstract}%
  We study the asymmetric matrix factorization problem under a natural nonconvex formulation with arbitrary overparametrization. The \emph{model-free} setting is considered, with minimal assumption on the rank or singular values of the observed matrix, where the global optima provably overfit. We show that vanilla gradient descent with small random initialization \emph{sequentially} recovers  the principal components of the observed matrix. Consequently, when equipped with proper early stopping, gradient descent produces the best low-rank approximation of the observed matrix without explicit regularization. We provide a sharp characterization of the relationship between the approximation error, iteration complexity, initialization size and stepsize. Our complexity bound is almost dimension-free and depends logarithmically on the approximation error, with significantly more lenient requirements on the stepsize and initialization compared to prior work. Our theoretical results provide accurate prediction for the behavior gradient descent, showing good agreement with numerical experiments.%
\end{abstract}

\newpage 
\tableofcontents
\newpage

\section{Introduction} \label{sec: intro}

Let $X\in \RR^{m\times n}$ be an arbitrary given matrix. In this paper, we study the following nonconvex objective function for asymmetric matrix factorization
\begin{align}
    f(F,G) := \frac{1}{2}\fnorm{FG^\top - X}^2 \quad 
    \label{eq: objective}\tag{\text{$\mathcal{M}$}}
\end{align}
and the associated vanilla gradient descent dynamic: 
\begin{align}
    \text{initialize} \; (F_0,G_0); \;\;\text{run iteration}\;
    (F_{t+1},G_{t+1}) = (F_t,G_t) - \eta \nabla f(F_t,G_t), 
    \label{eq: GD}\tag{\text{$\mathcal{GD}$-$\mathcal{M}$}}
\end{align}
where $(F,G)\in \RR^{m\times k}\times \RR^{n\times k}$ are the factor variables, $k\ge 1$ is a \emph{user-specified} rank parameter, $\eta>0$ is the stepsize, and $\nabla f(F,G) = ((FG^\top - X)G, (FG^\top - X)^\top F).$

In many statistical and machine learning settings \cite{fan2020understanding}, the observed matrix $X$ takes the form
$ \label{eq: observedX}
    X = X_\natural + E, 
$
where $X_\natural$ is an unknown low-rank matrix to be estimated, and $E$ is the additive error/noise. Gradient descent applied to the objective~\eqref{eq: objective} is a natural approach for computing an estimate of $X_\natural$. Such an estimate can in turn be used as an approximate solution in more complicated nonconvex matrix estimation problems, such as matrix sensing \cite{recht2010guaranteed},  matrix completion \cite{candes2010power} and even nonlinear problems like the Single Index Model \cite{fan2020understanding}. In fact, the gradient descent procedure~\eqref{eq: GD} is often used, explicitly or implicitly, as a subroutine in more sophisticated algorithms for these problems. As such, characterizing the dynamics of~\eqref{eq: GD} provides deep intuition for more general problems and is considered an important first step for 
understanding various aspects of (linear) neural networks \cite{du2018algorithmic,ye2021global}.

Despite the apparent simplicity of the  dynamic~\eqref{eq: GD}, our understanding of its behavior remains limited, especially for general settings of $X$ and $k$. Results in existing work often only apply to symmetric, exactly low-rank matrices $X$ or specific choices of $k$. Many of them impose strong assumption on the initialization $(F_0, G_0)$, only provides asymptotic analysis, or has order-wise suboptimal error and iteration complexity bounds. We discuss these existing results and the associated challenges in greater details later.

Of particular interest is the \emph{overparametrization} regime, which is common in modern  machine learning paradigms~\cite{huang2019gpipe,kolesnikov2020big,tan2019efficientnet}. In the context of the objective~\eqref{eq: objective}, overparametrization means choosing the rank parameter $k$ to be larger than what is statistically necessary, e.g., $k=\min\{m,n\} \gg $ rank$(X_\natural)$. Doing so, however, necessarily leads to overfitting in general. Indeed, with $k=\min\{m,n\}$, any global optimum of~\eqref{eq: objective} is simply (a full factorization of) $X$ itself and overfits the noise/error in $X$, therefore failing to provide a useful estimate for $X_\natural$. 
Moreover, as can be seen from the numerical results in Figure \ref{fig: smGDeearlystop}, gradient descent~\eqref{eq: GD} with random initialization asymptotically converges to such a global minimum, with a vanishing training error $\fnorm{FG^\top - X}$ (dashed lines) but a large test error $\fnorm{FG^\top - X_\natural}$ (solid lines).

\begin{figure}[H]
  \begin{center}
    \includegraphics[height=0.3\textwidth, clip, trim = 0 0 0 0]{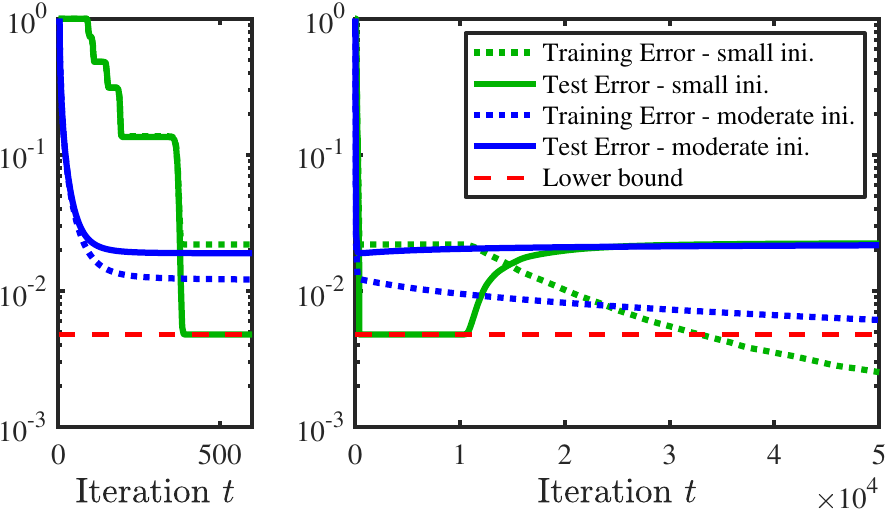}
    \vspace{-0.2cm}
  \end{center}
  \caption{
  Evolution of training error $\fnorm{F_tG_t^\top-X}$ and testing error $\fnorm{F_tG_t^\top -X_\natural}$ of \eqref{eq: GD} with $k=200$ under the model $X=X_\natural +E$, where  $X_\natural\in \mathbb{R}^{250\times 200}$ has rank $r=5$ and $\fnorm{X_\natural}=1$, and the noise matrix $E$ has i.i.d.\ zero-mean Gaussian entries and expected operator norm $\mathbb{E}\|E\|\approx 2\times 10^{-3}$. 
  The green lines use small initialization ($\fnorm{F_0},\fnorm{G_0}\ll \fnorm{X_\natural}$), and the blue lines use moderate initialization ($\fnorm{F_0},\fnorm{G_0}\asymp \fnorm{X_\natural}$). The red line represents the optimal statistical error $\sqrt{r} \|E\|$. The left panel zooms in the first 600 iterations.
  }\label{fig: smGDeearlystop}
\end{figure}

A careful inspection of Figure~\ref{fig: smGDeearlystop}, however, reveals an interesting phenomenon: gradient descent with \emph{small} random initialization achieves a small (and near optimal) test error before it eventually overfits; on the other hand, this behavior is not observed with moderate initialization.
We note that similar phenomena have been empirically observed in many other statistical and machine learning problems, where vanilla gradient descent coupled with \emph{small random initialization} (\texttt{SRI}) and \emph{early stopping} (\texttt{ES}) has good generalization performance, even with overparametrization \cite{woodworth2020kernel,ghorbani2020neural,prechelt1998early,wang2021early,li2018algorithmic,stoger2021small}. This observation motivates us to theoretically characterize, both qualitatively and quantitatively, the behavior of gradient descent~\eqref{eq: GD} and the \emph{algorithmic regularization} effect of \texttt{SRI} and \texttt{ES}.

\paragraph{Our results} 

We relate the dynamics of~\eqref{eq: GD} to the best low-rank approximations of $X$, defined as $X_r = \argmin_{\rank(Y)\leq r}\fnorm{Y-X}^2$ for $r=1,2,\ldots$ Our main results, Theorems~\ref{thm: mainthmallsigs} and~\ref{thm: mainthmonesig}, establish the following: 
\vspace{5pt}
\begin{quote}
    The iteration \eqref{eq: GD} with \texttt{SRI} \emph{sequentially} approaches the principal components of $X$, and proper \texttt{ES} produces the \emph{best low-rank approximation} of~$X$.
\end{quote}
\vspace{5pt}
Specifically, we show that for each $r\in[1,k]$, there exists a (range of) stopping time $t$ such that~\eqref{eq: GD} terminated at iteration $t$ produces an approximate $X_r$. Moreover, we provide a \emph{sharp} characterization of the scaling relationship between the approximation error, iteration complexity, initialization size and stepsize. This quantitative characterization agrees well with numerical experiments.

It is known that under many statistical models where $X$ is an observed noisy version of some structured matrix $X_\natural$, the matrix $X_r$ with an appropriate $r$ is a statistically optimal estimator of $X_\natural$ \cite{chatterjee2015matrix,fan2020understanding}. Our results thus imply that gradient descent with $\texttt{SRI}$ and $\texttt{ES}$ learns such an optimal estimate, even with overparametrization and no explicit regularization. In fact, our results are more general, applicable to any observed matrix $X$ and not tied to the existence of a ground truth $X_\natural$.

We emphasize that we do not claim that vanilla gradient descent is a more efficient way for computing $X_r$ for a given $r$ than standard numerical procedures (e.g., via singular value decomposition). Rather, our focus is to show, rigorously and quantitatively, that overparametrized gradient descent, a common and fundamental algorithmic paradigm, has an implicit regularization effect characterized by a deep connection to the computation of best low-rank approximation.

\paragraph{Analysis and challenges} 

Our analysis elucidates the mechanism that gives rise to the algorithmic regularization effect of small initialization and early stopping: starting from a small initial $F_0 G_0^\top $, the singular values of the iterate $F_t G_t^\top $ approach those of $X$ at geometrically different rates, hence $F_t G_t^\top $ approximates $X_1, X_2, \ldots$ sequentially (we elaborate in Section~\ref{section: earlyStoppingAndSmallIntialization}). While the intuition is simple, a rigorous and sharp analysis is highly non-trivial,  due to the following challenges:

    (i) \emph{Model-free setting.} Most existing work assumes that $X$ is (exactly or approximately) low-rank with a sufficiently large singular value gap $\delta$ between the $r$-th and $(r+1)$-th singular values of $X$ \cite{li2018algorithmic,zhuo2021computational,ye2021global,fan2020understanding,stoger2021small}. We allow for an arbitrary nonzero $\delta$ and characterize its impact on the stopping time and final error. In this setting, the ``signal'' $X_r$ may have magnitude arbitrarily close to that of the ``noise'' $X-X_r$ \emph{even in operator norm}.

    (ii) \emph{Asymmetry}.
    Since the objective~\eqref{eq: objective} is invariant under the rescaling $(F,G) \to (cF, c^{-1} G)$, the magnitudes of $F$ and $G$ may be highly imbalanced, in which case the gradient has a large Lipschitz constant. This issue is well recognized to be a primary difficulty in analyzing the gradient dynamics~\eqref{eq: GD} even without overparametrization \cite{du2018algorithmic}. Most previous works either restrict to the symmetric positive semidefinite formulation  \cite{li2018algorithmic,zhuo2021computational,stoger2021small,zhang2021preconditioned, ma2021sign,ding2021rank,
zhang2021sharp}, or add an explicit regularization term of the form $\fnorm{F^\top F-G^\top G}^2$ \cite{tu2016low,zheng2016convergence}.

    (iii) \emph{Trajectory analysis and cold start}. 
    As the desired $X_r$ is not a local minimizer of \eqref{eq: objective}, our analysis is inherently trajectory-based and initialization-specific. 
    
    Random initialization leads to a cold start: the initial iterates $F_t G_t^\top$ are far from and nearly orthogonal to $X_r$ when the dimensions $(m,n)$ are large.
    More precisely, assuming the SVD $X_r = U\Sigma V^\top$, one may measure the relative signal strength of $(F_t,G_t)$ by  
    \begin{equation}\label{eq: initialiSignal}
        \texttt{sig}(F_t,G_t) := \frac{\min\{\sigma_r(U^\top F_t),\sigma_r(V^\top G_t)\}}{\max\{\sigma_1((I-UU^\top)F_t),\sigma_1((I-VV^\top )G_t)\}},
    \end{equation}
    which is the ratio between the projections of $F_t$ and $G_t$ to the column/row space of $X$ and the projections to the complementary space. 
    Most existing work requires $\texttt{sig}(F_0,G_0)$ to be larger than a universal constant \cite{fan2020understanding,li2018algorithmic}, which does not hold when $k,r=o(m)$.\footnote{For $F_0,G_0$ with i.i.d.\ standard Gaussian entries, we have w.h.p.\ $\texttt{sig}(F_0,G_0) \lesssim \frac{\sqrt{k}+\sqrt{r-1}}{|\sqrt{m-r}-\sqrt{k}|}$ for $m\asymp n$.\label{ft: sig}}

    (iv) \emph{General rank overparametrization}. Our result holds for any choice of $k$ with $k\geq r$. The work in \cite{ye2021global,ma2021beyond} only considers the exact-parametrization setting $k=r$. The work \cite{fan2020understanding} assumes the specific choice $k=2m+2n$; as mentioned in the previous paragraph, the setting with a smaller $k$ involves additional challenges due to cold-start.

\section{Related work}
\label{sec: related}

The literature on gradient descent for matrix factorization is vast; see  \cite{chen2018harnessing,chi2019nonconvex} for a survey. Most prior work focuses on the exact parametrization setting $k=r$ (where $r$ is the target rank or the rank of a ground truth matrix $X_\natural$) and requires an explicit regularizer $\fnorm{F^\top F - G^\top G}^2$. More recent work in \cite{ding2021rank,du2018algorithmic, fan2020understanding,  li2018algorithmic,ma2021beyond,ma2021sign,stoger2021small,ye2021global, zhang2021preconditioned,zhang2021sharp,zhuo2021computational}, discussed in the last section, studies (overparametrized) matrix factorization and implicit regularization. Below we discuss recent results that are most related to ours.


The work~\cite{ye2021global} also provides recovery guarantees for vanilla gradient descent \eqref{eq: GD} with random small initialization. Their result only applies to the setting where the matrix $X$ has exactly rank $r$ and $k=r$, i.e., with exact  parametrization. Moreover, their choice of stepsize is quite conservative and consequently their iteration complexity scales proportionally with dimension $m+n$. In comparison, we allow for significantly larger stepsizes and establish almost dimension-free iteration complexity bounds.

The work~\cite{fan2020understanding} considers a wide range of statistical problems with a \emph{symmetric} ground truth matrix $X_\natural$ and shows that $X_\natural$ can be recovered with near optimal statistical errors using gradient descent for \eqref{eq: objective} with  $FG^\top$ replaced by  $FF^\top -GG^\top$.
While one may translate their results to the asymmetric setting via a dilation argument, doing so requires the specific rank parametrization $k =2m+2n$. This restriction allows for a decoupling of the dynamics of different singular values, which is essential to their analysis. While this decoupled setting provides intuition for the general setting (as we elaborate in Section~\ref{section: earlyStoppingAndSmallIntialization}), the same analysis no longer applies for other values of $k$, e.g., $k=2m+2n-1$, for which the decoupling ceases to hold. Moreover, a smaller value of $k$ leads to the cold start issue, as discussed in footnote~\ref{ft: sig}.

The work in~\cite{chou2020gradient} studies the deep matrix factorization problem. While on a high level their results deliver a message similar to our work, namely gradient descent sequentially approaches the principal components of $X$, the technical details differ significantly. In particular, they results only apply to symmetric $X$ and guarantee recovery of the positive semidefinite part of $X$. Their analysis relies crucially on the assumption $k=m=n$, a specific identity initialization scheme and the resulting decoupled dynamics, which do not hold in the general setting as discussed above. A major contribution of our work lies in handling the entanglement of singular values resulted from general overparametrization, asymmetry, and random initialization. 

In Section~\ref{section: auxiliary_lemmas}, we provide additional discussion on related work.

\section{Intuitions and the symmetric setting}
\label{section: earlyStoppingAndSmallIntialization}

In this section, we illustrate the behavior of gradient descent with small random initialization in the setting with a symmetric $X$, and explain the challenges for generalizing to asymmetric $X$. 

Consider a simple example where $m=n=k=2$,
and $X = \diag(\lambda_1,\lambda_2)$ is a positive semidefinite diagonal matrix that is approximately rank-1, i.e., $ \lambda_{2}\le \frac{1}{10}\lambda_1$. We consider the natural symmetric objective $f(F) = \frac{1}{4}\fnorm{FF^\top - X}^2$ and the associated gradient descent dynamic
$
    F_{t+1} = F_t - \eta(F_tF_t^\top -X)F_t
$,%
\footnote{This is equivalent to~\eqref{eq: GD} with initialization $F_0=G_0$.}
with initialization $F_0= \rho \sqrt{\lambda_1} I$ for some small $\rho>0$. 

It is easy to see that $ F_t$ is  \emph{diagonal} for all $t \ge 0$, and the $i$-th diagonal element of $F_t$, denoted by $f_{i,t}$, is updated as
\begin{equation}\label{eq: eigenvalueDynamic}
    f_{i,t+1} = f_{i,t}(1+\eta \lambda_i - \eta f_{i,t}^2),\quad  i =1,2.
\end{equation}
Thus, the dynamics of the two eigenvalues decouple and can be analyzed separately.
In particular, simple algebra shows that
(i) when $f_{1,t} < \sqrt{\frac{\lambda_1}{2}}$, 
$f_{1,t}$ increases geometrically by a factor of $1+\eta \lambda_1 - \eta f_{1,t}^2 \ge 1+ \frac{\eta \lambda_1}{2} $, i.e., $|f_{1,t+1}| \ge (1+ \frac{\eta \lambda_1}{2}) |f_{1,t}|$, and (ii) when $f_{1,t}\ge \sqrt{\frac{\lambda_1}{2}}$, $f_{1,t}$ converges to $\sqrt{\lambda_1}$ geometrically with a factor of $(1-\frac{\eta \lambda_1}{2})$, i.e., $|f_{1,t+1}-\sqrt{\lambda_1}|\le (1-\frac{\eta \lambda_1}{2})|f_{1,t}-\sqrt{\lambda_1}|.$
In a similar fashion, $f_{2,t}$ converges to the second eigenvalue $\sqrt{\lambda_2}$. It follows that the gradient descent iterate $F_t F_t^\top $ converges to the observed matrix $X$ as $t$ goes to infinity.

What makes a difference, however, is that $f_{2,t}$ converges at an exponentially slower rate than $f_{1,t}$. In particular, assuming the stepsize $\eta$ is sufficiently small, we can show that $f_{2,t}$ is always nonnegative and satisfies $f_{2,t+1}\le (1+\frac{\eta \lambda_1}{10})f_{2,t}$.
Note that the growth factor $1+\frac{\eta \lambda_1}{10}$ is smaller than $1+\frac{\eta \lambda_1}{2}$, the growth factor for $f_{1,t}$. We conclude that 
\begin{equation}\label{eq: mainIntuition}
    \textit{With small initialization, larger eigenvalues converge (exponentially) faster.}
\end{equation}

Thanks to the property~\eqref{eq: mainIntuition},  the gradient descent trajectory approaches the principal components of $X$ one by one. In particular, if the initial size $\rho$ is sufficiently small, then \eqref{eq: mainIntuition} implies the existence of time window for $t$ during which  $f_{1,t}$ is close to $\lambda_1$
while $f_{2,t}$ remains close to its initial value (i.e., close to $0$).
If we terminate at a time $t$ within this window, then the gradient descent output satisfies $ F_t F_t^\top \approx X_1$. If we continue the iteration, then $f_{2,t}$ eventually grows away from $0$ and  converges quickly to $\sqrt{\lambda_2}$, yielding $F_tF_t^\top \approx X_2$. 

In Section~\ref{section: auxiliary_lemmas}, we generalize the above argument to any $m\times m$ symmetric positive semi-definite $X$  via a diagonalization argument, which shows that $F_t F_t^\top$ approaches $X_1,X_2,X_3,\ldots$ sequentially.  
However, this simple derivation breaks down for general rectangular $F_t$ and $X$, since the dynamics of eigenvalues no longer decouple as in \eqref{eq: eigenvalueDynamic}; rather, they have complicated dependence on each other, as can be seen in the proof of our main theorems. One of the main contributions of our proof is to rigorously establish the property~\eqref{eq: mainIntuition} despite this complicated dependence.

\section{Main results and analysis}
\label{sec:main}

In this section, we present our main theorems on the trajectory of gradient descent under small random initialization and early stopping. We also outline the analysis, deferring the full proof to the appendices.

\subsection{Main theorems}

Recall that $X \in \mathbb{R}^{m\times n}$ is a general rectangular matrix and denote its singular values by $\sigma_1 \ge \cdots \ge \sigma_{\min\{m,n\}} \ge 0$. Fix $r \in [0,  \min\{k,m,n\}]$ and define the $r$-th condition number $\kappa_r := \frac{\sigma_1}{\sigma_r}$. Suppose that the gradient descent dynamic~\eqref{eq: GD} is initialized with $(F_0, G_0) = \frac{\rho}{3\sqrt{m+n+k}}(\tilde F_0, \tilde G_0)$, where $\rho>0$ is a size parameter and $\tilde F_0 \in \RR^{m\times k}, \tilde G_0 \in \RR^{n\times k}$ have i.i.d.\ $N(0,\sigma_1)$ entries. The operator norm is denoted by $\|\cdot\|$.

Below we state two theorems under one of the following assumptions: 
\begin{assumption}\label{assum: allrvalues}
The first $r+1$ singular values are distinct, i.e., $\sigma_1 > \cdots >\sigma_r >\sigma_{r+1}$.
\end{assumption}
\begin{assumption}\label{assum: rthvalue}
The $r$-th and $(r+1)$-th singular values are distinct, i.e., $\sigma_r > \sigma_{r+1}.$
\end{assumption}
Both theorems are high probability statements (w.r.t.\ random initialization); we refer to Theorem~\ref{thm: mainthmgeneralrestated} in the appendix for the precise value of the probability. 

Our first theorem shows that under Assumption~\ref{assum: allrvalues}, gradient descent with small random initialization approaches the principal components $X_1,X_2,\ldots,X_r$ sequentially.

\begin{theorem}
\label{thm: mainthmallsigs}
Suppose that Assumption~\ref{assum: allrvalues} holds and let $\underline{\delta}$ be any number  in $(0,1)$ such that  $\underline{\delta} \le  \min_{1\le s \le r}\{\frac{\sigma_s - \sigma_{s+1}}{\sigma_s}\} $. Fix any tolerance $\epsilon\le \frac{1}{m+n+k}$. Then, there exist some numerical constants $c, c'$ and a sequence of iteration indices
    \[ T^{(1)} \le T^{(2)}\le \ldots T^{(r)} \le \frac{c'}{\underline{\delta}\eta \sigma_r} \log\left(\frac{\kappa_r}{\underline{\delta} \epsilon}\right) \] 
such that with high probability, the iterates of gradient descent \eqref{eq: GD} with stepsize $\eta \le c\min\{\underline{\delta}, 1-\underline{\delta}\}\frac{\sigma_r^2}{\sigma_1^3}$ and initialization size $\rho \le  \left(\frac{c  \underline{\delta}\epsilon}{ \kappa_r}\right)^{\frac{1}{c\underline{\delta}}}$ satisfy
    \[ \|F_{T^{(s)}}G_{T^{(s)}}^\top - X_s\| \le \epsilon\sigma_1,\quad \forall s = 1,2, \ldots, r. \]
\end{theorem}

Our next, more precise theorem applies under Assumption~\ref{assum: rthvalue} and shows that there is in fact a \emph{range} of iterations at which gradient descent approximates $X_r$. Note that the first theorem can be derived by applying the second theorem to each $r=1,2,\ldots$ 

\begin{theorem}
\label{thm: mainthmonesig}
Suppose that Assumption~\ref{assum: rthvalue} holds and let $\underline{\delta}$ be any number  in $(0,1)$ such that $\underline{\delta} \le \delta:= \frac{\sigma_r - \sigma_{r+1}}{\sigma_r}$.  Fix any tolerance $\epsilon\le \frac{1}{m+n+k}$. The following holds with high probability for some numerical constants $c, c'$. Consider gradient descent \eqref{eq: GD} with stepsize $\eta \le c\min\{\underline{\delta}, 1-\underline{\delta}\}\frac{\sigma_r^2}{\sigma_1^3}$ and initialization size $\rho 
\le \left(\frac{c  \underline{\delta}\epsilon}{ \kappa_r}\right)^{\frac{1}{c\underline{\delta}}}$. Let  $T = \Big\lfloor \frac{\log(\rho^{\frac{\underline{\delta}}{2(2-\underline{\delta})}}/\rho)}{\log(1+(1-\underline{\delta})\eta  \sigma_r)} \Big\rfloor$, which is $O\left( \frac{1}{\underline{\delta} \eta \sigma_r} \log\left(\frac{\kappa_r}{\underline{\delta}\epsilon}\right)\right)$ for
 $\rho 
=\left(\frac{c  \underline{\delta}\epsilon}{ \kappa_r}\right)^{\frac{1}{c\underline{\delta}}}$.
Then, for all $t$ such that $(1-c'\underline{\delta}) T \le t \le T$, we have 
	\begin{equation}\label{eq: equivalence}
			 \| F_{t}G_{t}^\top - X_r\| \le \epsilon\sigma_1.
			 \end{equation}
\end{theorem}

We highlight that Theorem~\ref{thm: mainthmonesig} applies to any observed matrix $X$ with a nonzero singular value gap $\delta$,\footnote{Otherwise $X_r$ is not uniquely defined.} which can be arbitrarily small; we refer to this as the \emph{model-free} setting.
Moreover, Theorem~\ref{thm: mainthmonesig} quantifies the relationship between various problem parameters: if $X$ has a relative singular value gap $\delta$, then gradient descent with initialization size $\rho \lesssim \epsilon^{\frac{1}{\delta}} $
and early stopping at iteration $O\left( \frac{1}{\delta } \log \frac{1}{\epsilon} \right)$ outputs $X_r$ up to an $\epsilon$ error (we ignore logarithmic terms).
 
We make two remarks regarding the tightness of the above parameter dependence.
\begin{itemize}
    \item \emph{Final error, initialization size, and eigen gap}: The final error $\epsilon$ can be made arbitrarily small as long as the initialization size $\rho$ is sufficiently small. Moreover, as verified by our numerical experiments in Section \ref{sec: numerics}, the scaling $\rho \lesssim \epsilon^{\frac{1}{\delta}} $ predicted by Theorem~\ref{thm: mainthmonesig} is quite accurate. 
    \item \emph{Iteration complexity and stepsize:} The number of iterations needed for an $\epsilon$ error scales as $\log(1/\epsilon)$, which is akin to a geometric/linear convergence rate. Moreover, our stepsize and iteration complexity are independent of the dimension $m,n$ (up to log factors), both of which improve upon existing results in~\cite{ye2021global}, which requires a significantly smaller stepsize and hence an iteration complexity proportional to $(m+n)^2$. Again, our dimension-independent scaling agrees well with the numerical results in Section~\ref{sec: numerics}.

\end{itemize}

\subsection{Proof Sketch for Theorem~\ref{thm: mainthmonesig}}
\label{sec: sketchAnalysis}

We sketch the main ideas for proving Theorem~\ref{thm: mainthmonesig} and discuss our main technical innovations. Our proof is inspired by the work~\cite{ye2021global}, which studies the setting with low-rank $X$ and exact parametrization $k=r=\text{rank}(X)$.

We start by simplifying the problem using the singular value decomposition (SVD) $X = \Phi \Sigma_X \Psi^\top$ of $X$, where $\Phi \in \RR^{m\times m}, \Sigma_X \in \RR^{m\times n}$ and $\Psi \in \RR^{n \times n}$. By replacing $F$, $G$ with $\Phi^\top F$, $\Psi^\top G$, respectively, we may assume without loss of generality that $X$ is diagonal. The distribution of the initial iterate $(F_0,G_0)$ remains the same thanks to the rotational invariance of Gaussian. Hence, the gradient descent update~\eqref{eq: GD} becomes
\begin{equation}\label{eq: GDdiag}
		F_+  = F + \eta(\Sigma_X - FG^\top) G,\quad \text{and}\quad 
		G_+  = G + \eta(\Sigma_X - FG^\top)^\top F, 
\end{equation}
where the subscript $+$ indicates the next iterate and will be used throughout the rest of the paper. 
Let $U$ be the upper $r\times k$ submatrix of $F$ and $J$ be the lower $(m-r)\times k$ submatrix of $F$. Similarly, let $V$ be the upper $r \times k$ submatrix of $G$ and $K$ be the lower $(n-r)\times k$ submatrix of $G$. Also let $\Sigma = \diag(\sigma_1,\ldots, \sigma_r)$ be the upper left $r\times r$  submatrix of and $\tilde \Sigma \in \RR^{(m-r)\times (n-r)}$ be a diagonal matrix with $\sigma_{r+1},\ldots, \sigma_{\min\{m,n\}}$ on the diagonal. The gradient descent update \eqref{eq: GDdiag} induces the following update for the ``signal'' matrices $U,V$ and the ``error'' matrices $J,K$:
\begin{align}\label{eqn: updateofUVJKintro}
	\begin{cases}
		U^+ = U +\eta \Sigma V - \eta U(V^\top V + K^\top K)\\
		V^+ = V+ \eta \Sigma U- \eta V(U^\top U + J^\top J)\\
		J^+ = J + \eta \tilde \Sigma K  - \eta J(V^\top V + K^\top K)\\
		K^+ = K + \eta \tilde \Sigma^\top J - \eta K(U^\top U + J^\top J).
	\end{cases}
\end{align}
We may bound the difference $FG^\top - X_r$ as the following:
\begin{align*}
	\|FG^\top - X_r\|\le \|UV^\top -\Sigma\| + \|UK^\top\| + \|JV^\top\| + \|JK^\top\|.
\end{align*}
Hence, it suffices to show that the signal term $UV^\top$ converges to $\Sigma$ and the error term $(J, K)$ remains small. To account for the potential imbalance of $U$ and $V$,  we introduce the following quantities using the symmetrization idea in~\cite{ye2021global}:
\begin{align}\label{eqn: decompositionintro}
	A = \frac{U+V}{2},\;
	B = \frac{U-V}{2},\;
	P= \Sigma -AA^\top + BB^\top, \;\text{and}\;
	Q= AB^\top -BA^\top.
	\end{align}
(Similarly define $A_t, B_t, P_t, Q_t$ based on the $t$-th iterates.) Here $A$ is the symmetrized part of the signal terms $U,V$, and $B$ represents the imbalance between them.
Since $\Sigma - UV^\top = P+Q$, the quantities $P$ and $Q$ capture how far the signal term is from the true signal $\Sigma$. 
Let $T$ be iteration index defined as in Theorem~\ref{thm: mainthmonesig}. Our proof studies three phases of lengths $T_1, T_2, T_3$ within these $T$ iterations, where $T_1 + T_2 + T_3 \le T$. The proof consists of three steps: 
\begin{itemize}
    \item \emph{Step 1.} We use induction on $t$ to show that the error term $(J_t,K_t)$ and the imbalance term $B_t$ remain small throughout the $T$ iterations. 

    \item \emph{Step 2.} We characterize the evolution of the smallest singular value $\sigma_r(A_t)$. After the first $T_1$ iteration, the value $\sigma_r(A_t)$ dominates the errors. Then, with $T_2$ more iterations, $\sigma_r(A_t)$ grows above the threshold $0.8\sqrt{\sigma_r}$, in which case signal term $UV^\top$ has magnitude close to that of the true signal $\Sigma_r$.
    \item \emph{Step 3.} After $\sigma_r(A_t)$ becomes sufficiently large, we show that $\|P_t\|$ decreases geometrically. After $T_3$ more iterations, $\|P_t\|$ has the same magnitude as the error terms $J_t, K_t$. Theorem~\ref{thm: mainthmonesig} then follows by bounding $\|F_tG_t^\top -X_r\|$ in terms of $P_t,Q_t,J_t,K_t$.
\end{itemize}

Our analysis of $P_t$ and $B_t$ departs from that in~\cite{ye2021global}, which requires the stepsize $\eta$ to depend on both problem dimension and initialization size, resulting in an iteration complexity that has polynomial dependence on the dimension. The better dependence in our Theorem~\ref{thm: mainthmonesig} is achieved using the following new techniques:
\begin{itemize}
    \item[1.] Unlike~\cite{ye2021global} which bounds ${B_t}, {J_t}, {K_t}$ by quantities independent of $t$, we control them using geometrically increasing series, which are tighter and more accurately capture the dynamics of ${B_t}, {J_t}, {K_t}$ across $t$.
    \item[2.] Our analysis decouples the choices of the stepsize $\eta$ and initialization size $\rho$, allowing them to be independent. We do so by making the crucial observation that the desired lower bound on $\lambda_r(P_t)$ only depends on $\sigma_r$ and the singular value gap $\delta$. As a result, we can take a very small initialization size $\rho$ (in both theory and experiments), since the iteration complexity depends only logarithmically on $\rho$. In contrast, the 
   desired lower bound on $\lambda_r(P_t)$ in \cite{ye2021global} depends on initialization size $\rho$,  and in turn their stepsize $\eta$ have more stringent dependence on $\rho$. 
    \item[3.] The analysis in~\cite{ye2021global} cannot be easily generalized to the overparametrized setting, since in this case the error terms $\|J_t\|, \|K_t\|$ no longer decay geometrically when they are within a small neighborhood of zero---to our best knowledge, tightly characterizing this local convergence behavior in the overparametrized setting remains an open problem. We circumvent this difficulty by using a smaller initialization size $\rho$, which is made possible by the tighter analysis outlined in the last bullet point.
\end{itemize}

\section{Experiments}
\label{sec: numerics}

We present numerical experiments that corroborate our theoretical findings on gradient descent with small initialization and early stopping. In Section~\ref{section: benefits}, we provide numerical results that demonstrate the dynamics and algorithmic regularization of gradient descent. In Section~\ref{sec: parameterDependence}, we numerically verify our theoretical prediction on the scaling relationship between the initialization size, stepsize, iteration complexity and final error. 

\subsection{Dynamics of gradient descent}\label{section: benefits}
 
We generate a random rank-$10$ matrix $X\in\mathbb{R}^{m\times n}$ with $m=250,n=200$ and $\fnorm{X}=1$, and run gradient descent~\eqref{eq: GD} with initialization size $\rho=10^{-6}$, stepsize $\eta=0.5$ and rank parameter $k=200$. As shown in Figure~\ref{fig: trainingvstestsingular}, the top singular values $\sigma_i(F_tG_t^\top)$ of the iterate grow towards those of $X$ sequentially at geometrically different rates. 
Consequently, for each $r=1,2,\ldots$, the iterate $F_tG_t^\top$ is close to the best rank-$r$ approximation $X_r$ when $\sigma_i(F_t G_t^\top) \approx \sigma_i(X), 1\le i\le r$ and $\sigma_i(F_t G_t^\top) \approx 0, i\ge r+1$; early stopping of gradient descent at this time outputs $X_r$.

 \begin{figure}[]
    \centering
    \subfloat[\label{fig: trainingvstestsingular}\footnotesize {The dashed lines track the evolution of singular values of the iterate $F_tG_t^\top$. The solid lines record the Frobenius norm distance between $F_tG_t^\top$ and $X_1, \ldots, X_4$, the leading components of $X$.
    }] 
        {\includegraphics[width=0.45\linewidth]{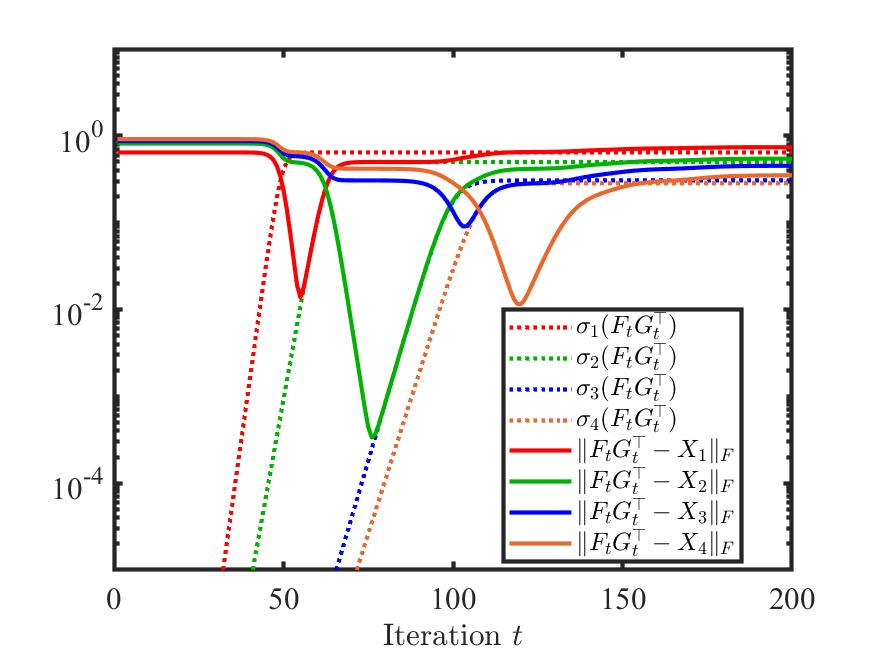}
} \hfill
      \subfloat[\label{fig: bigvssmall}\footnotesize {Convergence rate of  $\fnorm{F_tG_t^\top - X}$ for gradient descent with small ($\rho  \in\{10^{-4},10^{-6},10^{-8}\}\fnorm{X}$) and moderate ($\rho = \fnorm{X}$) initialization.
      }]  {\includegraphics[width=0.45\linewidth]{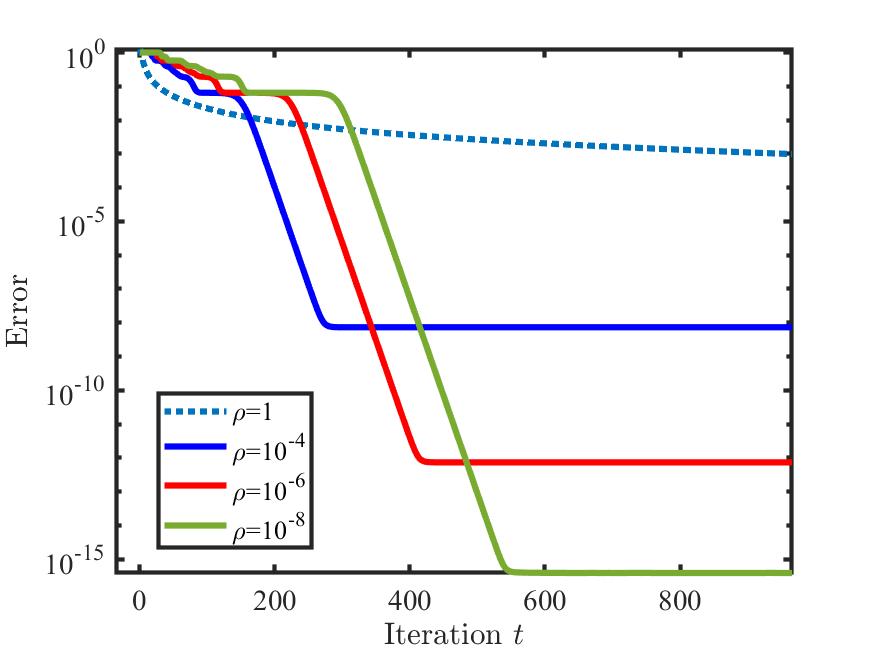}}
    \caption{\footnotesize {Dynamics of gradient descent with small random initialization and early stopping.}}
\end{figure}

In Figure~\ref{fig: bigvssmall}, we compare the convergence rates of gradient descent with small ($\rho \in \{10^{-2}, 10^{-4}, 10^{-6}\}$) and moderate ($\rho = 1$) initialization sizes. We see that using a small $\rho$ results in fast convergence to a small error level; moreover, the convergence rate is geometric-like (before saturation), which is consistent with the $\log(1/\epsilon)$ iteration complexity predicted by Theorem~\ref{thm: mainthmonesig}. Therefore, compared to moderate initialization, small initialization has both computational benefit (in terms of iteration complexity) and a statistical regularization effect (when coupled with early stopping, as demonstrated in Figure~\ref{fig: smGDeearlystop}).

\begin{figure}[t]
    \centering
    \subfloat[\footnotesize Relationship between $\epsilon$ and $\rho$\label{fig: FronormRelErrorInitialRho}]
        {\includegraphics[width=0.45\linewidth]{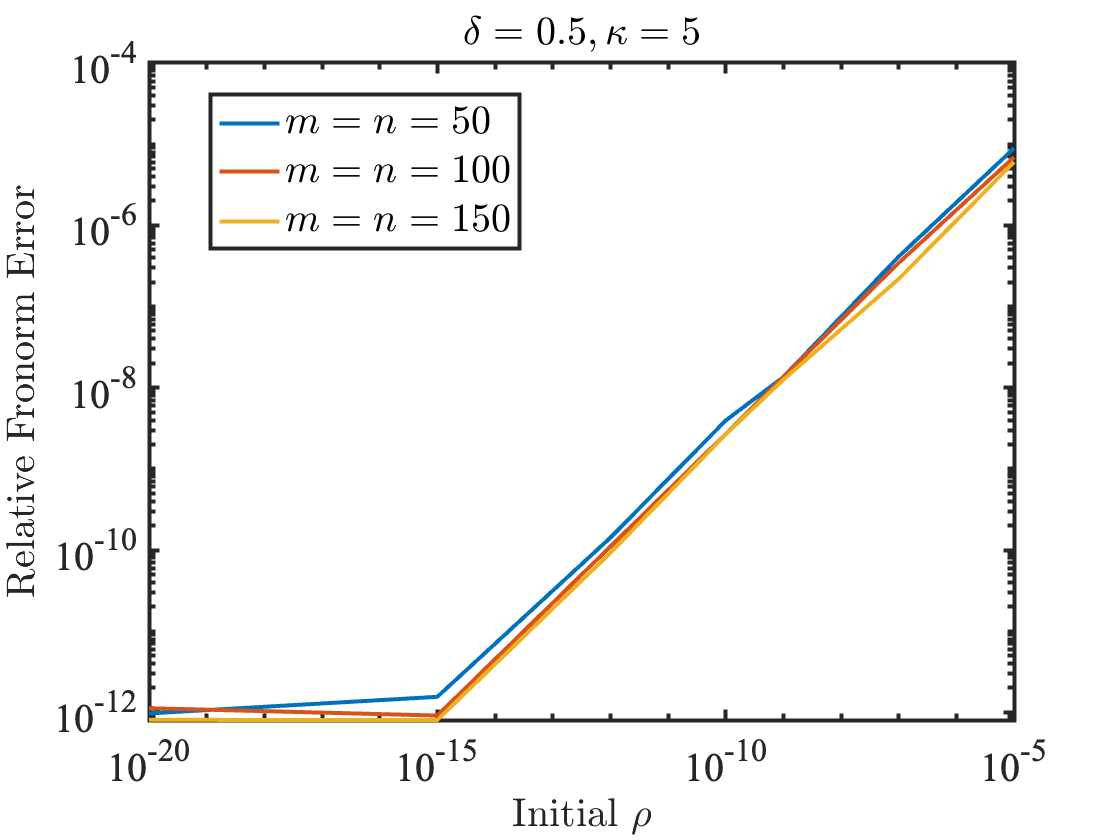}
} \hfill 
      \subfloat[{\footnotesize Relationship between $\epsilon$ and $\delta$}\label{fig: gapAndFronormRelError}]  {\includegraphics[width=0.45\linewidth]{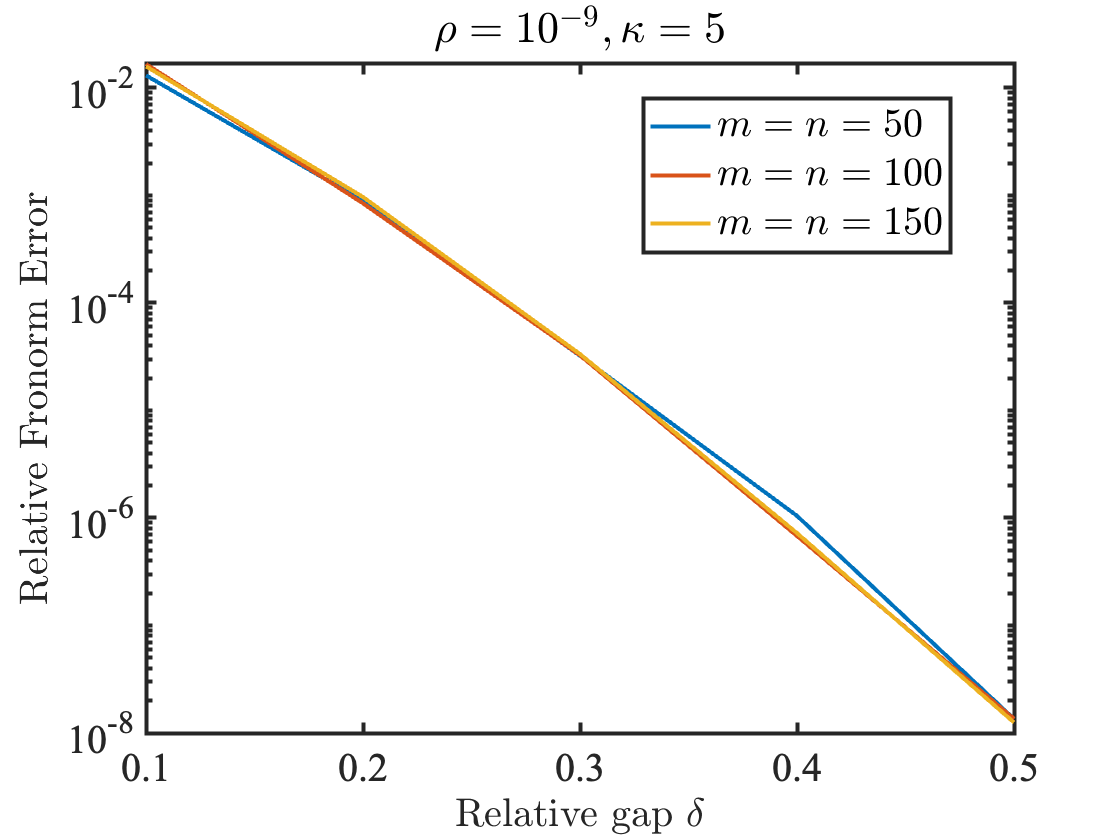}}
\hfill 
\subfloat[{\footnotesize Relationship between $T_0$ and $\delta$}\label{fig: gapAndT}]{
  \includegraphics[width=0.45\linewidth]{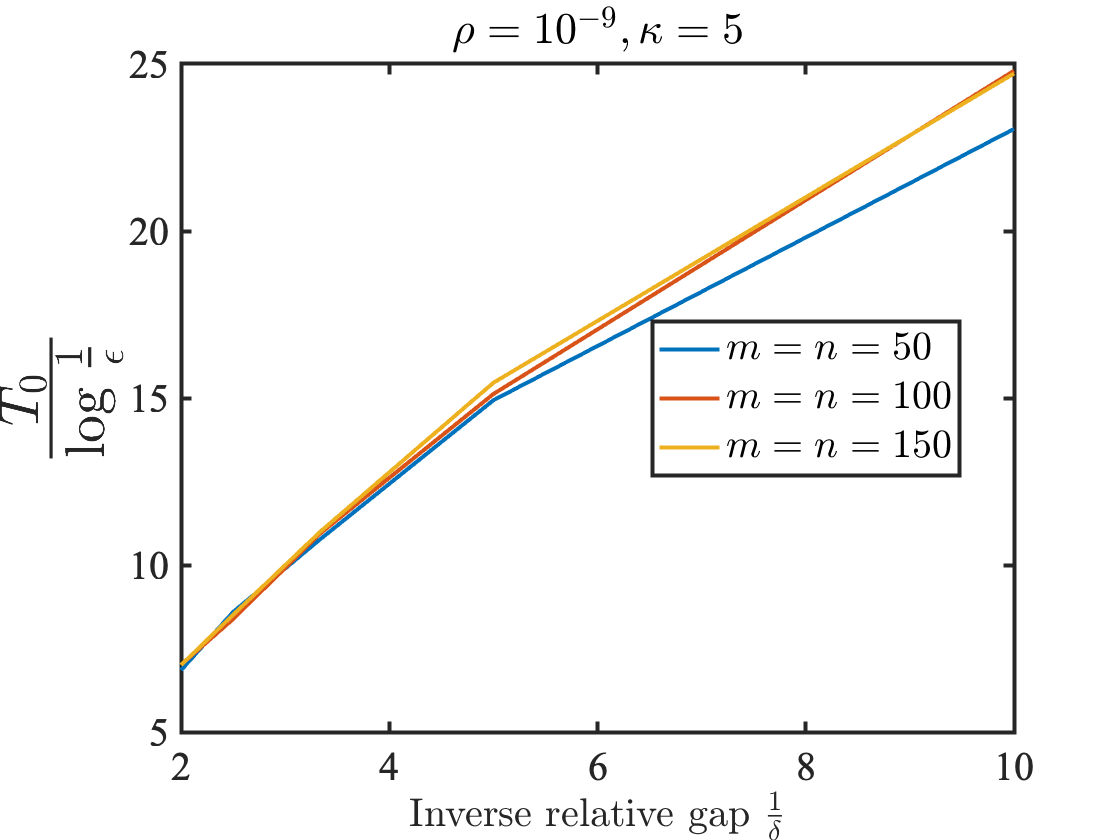}}
\hfill
\subfloat[\footnotesize $\epsilon$ vs.\ $(m,n)$ for fixed $\eta=0.25$\label{fig: stepsizeError.png}]{
  \includegraphics[width=0.45\linewidth]{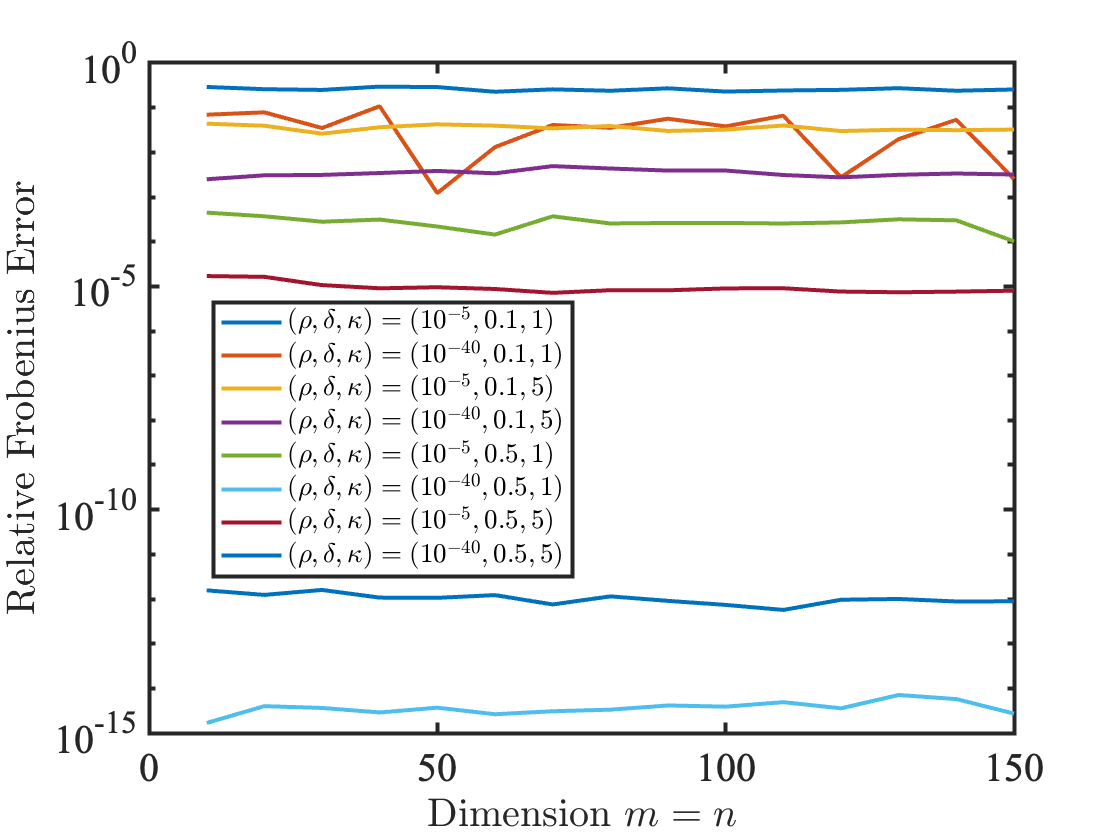}}

  \caption{Scaling relationship between relative Frobenius norm error $\epsilon$, initialization size $\rho$, relative singular value gap $\delta$, iteration complexity $T_0$ and stepsize $\eta$.}
 
\end{figure}

\subsection{Parameter dependence}
\label{sec: parameterDependence}

Theorem~\ref{thm: mainthmonesig} predicts that if $X$ has relative singular value gap  $\delta = \frac{\sigma_{r}-\sigma_{r+1}}{\sigma_r}$, then gradient descent outputs $X_r$ with a relative error $\fnorm{F_t G_t^\top-X_r}/\fnorm{X_r} = \epsilon$ in $T_0 = O\left( \frac{1}{\delta } \log \frac{1}{\epsilon} \right)$ iterations when using an initialization size $\rho \asymp \epsilon^{\frac{1}{\delta}} $ and a stepsize nearly independent of the dimension of $X$. We numerically verify these scaling relationships. 

In all experiments, we fix $r=3$, stepsize $\eta = 0.25$ and rank parametrization $k=10$, and generate a matrix $X\in \mathbb{R}^{m\times n}$ with $\sigma_3=1$. We vary the dimension $m=n$, relative singular value gap $\delta$ and condition number $\kappa = \frac{\sigma_1}{\sigma_r}$, and record the smallest relative error $\epsilon$ attained within $500$ iterations as well as the iteration index $T_0$ at which this error is attained. The results are the averaged over ten runs with randomly generated $X$ and shown in Figure~\ref{fig: FronormRelErrorInitialRho}-\ref{fig: stepsizeError.png}.

The results in Figure~\ref{fig: FronormRelErrorInitialRho} verify the relation $\log\epsilon \propto \log \rho$ for fixed $\delta$. We believe the flat part of the curves (for $\rho < 10^{-15}$) is due to numerical precision limits. Figure~\ref{fig: gapAndFronormRelError} verifies $\log \epsilon \propto -\delta$ for fixed $\rho$. Figure~\ref{fig: gapAndT} 
verifies $T_0 \propto \frac{1}{\delta}\log (\frac{1}{\epsilon})$. In all these plots, the the curves for different dimensions $m=n$ overlap, which is consistent with the (near) dimension-independent results in Theorem~\ref{thm: mainthmonesig}. Finally, Figure
\ref{fig: stepsizeError.png} shows that with a single fixed stepsize $\eta = 0.25$, the error $\epsilon$ is largely independent of the dimension $m=n$, for different values of $(\rho,\delta, \kappa)$. This supports the prediction of Theorem~\ref{thm: mainthmonesig} that the stepsize can be chosen independently of the dimension.

\section{Discussion}\label{sec: discussion}
In this paper, we characterize the dynamics of overparametrized vanilla gradient descent for asymmetric matrix factorization. We show that with sufficiently small random initialization and proper early stopping, gradient descent produces an iterate arbitrarily close to the best rank-$r$ approximation $X_r$ for any $r\leq k$ so long as the singular values $\sigma_r$ and $\sigma_{r+1}$ are distinct. Our theoretical results quantify the dependency between various problem parameters and match well with numerical experiments. Interesting future directions include extension to the matrix sensing/completion problems with asymmetric matrices, as well as understanding and capitalizing on the algorithmic regularization effect of overparametrized gradient descent in more complicated, nonlinear statistical models.

\section*{Acknowledgement}
Y.\ Chen is partially supported by National Science Foundation grants CCF-1704828 and  CCF-2047910. L.\ Ding is supported by National Science Foundation grant CCF-2023166.

\bibliographystyle{alpha}
\bibliography{reference}

\appendix

\section*{Organization of Appendix} 
We prove our main Theorem~\ref{thm: mainthmonesig} in Appendix~\ref{section: Analysis}, deferring some intermediate steps and technical lemmas to Appendices~\ref{section: induction} and~\ref{section: auxiliary_lemmas}, respectively.

\section{Proof of Theorem~\ref{thm: mainthmonesig}}
\label{section: Analysis}

For the ease of presentation, we work with (an upper bound) of the singular value ratio $ \gamma := 1-\underline{\delta} \ge \frac{\sigma_{r+1}}{\sigma_r}$, where we recall that $\underline{\delta} \le\frac{\sigma_r-\sigma_{r+1}}{\sigma_r}$ is a lower bound of the relative singular value gap.
Theorem~\ref{thm: mainthmonesig} is a simplified version of the following more precise theorem. The numerical constants below are not optimized.

\begin{theorem}
\label{thm: mainthmgeneralrestated}
Fix any $r\leq k $. Suppose $\sigma_{r+1}<\sigma_r$. Pick any $\gamma \in (0,1)$ such that $ \frac{ \sigma_{r+1}}{\sigma_r} \le \gamma$. Pick any stepsize $\eta \le \min\left\{\frac{\gamma\sigma_r^2}{600\sigma_1^3}, \frac{(1-\gamma)\sigma_r}{20\sigma_1^2}\right\}$. For any $c_\rho<1$, pick  any initialization size $\rho$ satisfying
	$$\rho \le \min\left\{\frac{1}{3}, \frac{1-\gamma}{24}, \frac{c_\rho \sqrt{\sigma_1}}{12(m+n+k) \sqrt{\frac{1-\gamma}{24}}\sqrt{\sigma_r} } \right\}^{\frac{180\gamma(1+\gamma)}{(3-2\gamma)(1-\gamma)}} $$ 
	and 
	$$\rho \le \min \left\{\left(\frac{(1-\gamma) c_\rho \sigma_r}{1200(m+n+k)r\sigma_1}\right)^{\frac{2(1+\gamma)}{1-\gamma}}, \left(\frac{\gamma \sigma_r^2}{1600r\sigma_1^2}\right)^{\frac{1+\gamma}{1-\gamma}}, \frac{\gamma \sigma_r \sqrt{2r}}{16\sigma_1 \sqrt{m+n+k}}\right\}.$$
Define
	\begin{equation}
	\begin{aligned}
	    T_1 &:=\floor{\frac{\log \left(\frac{12(m+n+k)\sqrt{\frac{1-\gamma}{24}}\sqrt{\sigma_r}}{c_\rho \size \sqrt{\sigma_1}}\right)}{\log(1+\frac{1+\gamma}{2}\eta \sigma_r)}}+1, 
	    \qquad 
	    T_2 := \floor{\frac{\log\left(\sqrt{\frac{24}{1-\gamma}}\right)}{\log(1+0.1\eta \sigma_r)}} +1,\\
	    T_3 &:= \floor{\frac{\log \left( \rho^{\frac{1-\gamma}{2(1+\gamma)}}/3\right)}{\log\left(1-\frac{3}{2}\eta \sigma_r \right)}}+1, \qquad 
	T = \floor{\frac{\log(\rho^{\frac{1-\gamma}{2(1+\gamma)}}/\rho)}{\log(1+\gamma\eta  \sigma_r)}}.
	\end{aligned}
	\end{equation}
Define $T_0:=T_1+T_2+T_3$. Then we have 
\begin{align}
    \frac{T_0}{T} \le 1- \frac{(3-2\gamma)(1-\gamma)}{6(3\gamma+1)}. \label{eqn: windowsize}
\end{align}
Moreover, there exists a universal constant $C$ such that with probability at least $1-(C\sizei)^{k-r+1} - C\exp(-k/C)$, for all $T_0\leq t\leq T$, we have
\begin{equation}\label{eq: Finalerror}
    \|F_tG_t^\top - X_r\| \le 8 \rho^{\frac{\delta}{2(2-\delta)}} \sigma_1 + 4\rho^{\frac{\delta}{2(2-\delta)}}\sqrt{2r}\sigma_1.
\end{equation}
\end{theorem}

\begin{proof}

The inequality~\eqref{eqn: windowsize} follows from the  auxiliary Lemma~\ref{lem: T1T2T3lessT}, which can be proved by mechanical though tedious calculation.

To prove the bound \eqref{eq: Finalerror}, we adopt the notations and strategy given in Section~\ref{sec: sketchAnalysis}, where we argue that it suffices to prove the bound in the case with $X= \Sigma_X$. 
Recall the quantities $A,B$ and $P$ defined in~\eqref{eqn: decompositionintro}. We can obtain the update of these quantities based on~\eqref{eqn: updateofUVJKintro}:
\begin{align}
	A_+  &= A + \eta P A - \eta(AB^\top - BA^\top)B - \eta A\frac{K^\top K+ J^\top J}{2}-  \eta B\frac{K^\top K- J^\top J}{2}; \label{eqn: updateofA} \\
	B_+  &= B - \eta PB + \eta(AB^\top - BA^\top)A - \eta A \frac{K^\top K- J^\top J}{2} - \eta B \frac{K^\top K + J^\top J}{2}\label{eqn: updateofB}
\end{align}
and
\begin{equation}
\label{eqn: updateofP}
\begin{aligned}
	P_+ &= P - \eta P(\Sigma - P) - \eta (\Sigma- P)P + \eta^2(PPP -P\Sigma P) - 2\eta BB^\top P - 2\eta PBB^\top \\
	 & \qquad - \eta(A+\eta PA)C^\top -\eta C(A+\eta PA)^\top - \eta^2 CC^\top  \\
	 & \qquad +\eta (B+\eta PB)D^\top +\eta D(B+\eta PB)^\top +\eta^2 DD^\top ,
\end{aligned}
\end{equation}
where 
\begin{align*}
C &= -AB^\top B + BA^\top B -A \frac{K^\top K+ J^\top J}{2} - B\frac{K^\top K - J^\top J}{2},\\
D &= AB^\top A - BA^\top A - A\frac{K^\top K - J^\top J}{2} - B\frac{K^\top K+J^\top J}{2}.
\end{align*}

Note that 
	\begin{align*}
		F_tG_t^\top - X_r = \begin{pmatrix}
			U_t \\
			J_t
		\end{pmatrix} \begin{pmatrix}
		V_t^\top & K_t^\top 
	\end{pmatrix} - \begin{pmatrix}
	\Sigma &  0 \\
	0    & 0
\end{pmatrix}= \begin{pmatrix}
	U_tV_t^\top - \Sigma & U_t K_t^\top \\
	J_tV_t^\top   & J_t K_t^\top
\end{pmatrix}.
	\end{align*}
Therefore, we have the bound
\begin{align*}
	\|F_tG_t^\top - X_r\|\le \|U_tV_t^\top -\Sigma\| + \|U_tK_t^\top\| + \|J_tV_t^\top\| + \|J_tK_t^\top\| .
\end{align*}
By Proposition~\ref{thm: inductivestep} and Proposition~\ref{thm: pgetsmaller} given in Section~\ref{section: induction} to follow, it holds with high probability that for any $T_1+T_2+T_3\le t\le T$, 
$$
\|U_tK_t^\top\| \le 3\rho^{\frac{1-\gamma}{2(1+\gamma)}}\sigma_1,\qquad \|J_tV_t^\top\| \le 3\rho^{\frac{1-\gamma}{2(1+\gamma)}}\sigma_1,\qquad \|J_tK_t^\top\|\le \rho^{\frac{1-\gamma}{(1+\gamma)}}\sigma_1,
$$
and 
\begin{align*}
	\|U_tV_t^\top -\Sigma\| &=\|P_t + Q_t\|\le \|P_t\| + \|Q_t\|\le \rho^{\frac{1-\gamma}{2(1+\gamma)}} \sigma_1 + 4\rho^{\frac{1-\gamma}{2(1+\gamma)}}\sqrt{2r}\sigma_1. 
\end{align*}
By combining pieces, we obtain that 
$$
\|F_tG_t^\top - X_r\| \le 8 \rho^{\frac{1-\gamma}{2(1+\gamma)}} \sigma_1 + 4\rho^{\frac{1-\gamma}{2(1+\gamma)}}\sqrt{2r}\sigma_1.
$$
thereby completing the proof of Theorem~\ref{thm: mainthmgeneralrestated}.
\end{proof}

\section{Induction steps for proving Theorem~\ref{thm: mainthmonesig}}
\label{section: induction}

\begin{proposition}[Base Case]\label{thm: basecase}
Suppose that  $\tilde F_0 \in \RR^{m\times k}$ and $ \tilde G_0 \in \RR^{n\times k}$ have i.i.d. $N(0,\sigma_1)$ entries. For any fixed $\size \le \sqrt{\frac{\sigma_r}{2\sigma_1}}$,  we take $F_0 = \frac{ \rho}{3\sqrt{m+n+k}} \tilde F_0$ and $G_0 = \frac{ \rho}{3\sqrt{m+n+k}}  \tilde G_0$. Then with probability at least $1-(C\sizei)^{k-r+1} - C\exp(-k/C)$, we have 
	\begin{enumerate}
		\item $\max\{\norm{U_0}, \norm{V_0}, \norm{J_0}, \norm{K_0}\} \le \rho \sqrt{\sigma_1}$.
		\item $\norm{U_0^\top U_0 +J_0^\top J_0} \le 7\sigma_1$.
		\item $\norm{V_0^\top V_0 + K_0^\top K_0} \le 7\sigma_1$.
		\item  $\sigma_r(A_0) \ge  \frac{\sizei \size \sqrt{\sigma_1}}{12(m+n+k)}$. 
		\item $\lambda_r(P_0) \ge 0$. \label{basecaseitem: 5}
		\item\label{basecaseitem: 6}$\fnorm{B_0}\le \rho\sqrt{2r} \sqrt{\sigma_1}$.
	\end{enumerate}
\end{proposition}

\begin{proof}
	The first four Items follow from Lemma~\ref{thm: initializationqualitysupple}. For Item~\ref{basecaseitem: 5}, by definition of $P_0$,
	\begin{align*}
		\lambda_r(P_0) \ge \sigma_r - \|A_0A_0^\top\| - \|B_0B_0^\top\|\ge \sigma_r - 2\rho^2 \sigma_1&\ge 0.
	\end{align*}
For Item~\ref{basecaseitem: 6}, by the fact that $\rank(B_0)\le 2r$, we have $\fnorm{B_0}\le \sqrt{2r} \norm{B_0} \le \rho \sqrt{2r}\sqrt{\sigma_1}.$
\end{proof}

In the sequel, we condition on the high probability event that the conclusion of Proposition~\ref{thm: basecase} holds. The rest of the analysis is deterministic.

\begin{proposition}[Inductive Step]\label{thm: inductivestep}
	Suppose that the stepsize satisfies $\eta \le \frac{\gamma\sigma_r^2}{600\sigma_1^3}$, 
	the initial size satisfies 
	$$\rho \le \max \left\{\left(\frac{(1-\gamma) c_\rho \sigma_r}{1200(m+n+k)r\sigma_1}\right)^{\frac{2(1+\gamma)}{1-\gamma}}, \left(\frac{\gamma \sigma_r^2}{1600r\sigma_1^2}\right)^{\frac{1+\gamma}{1-\gamma}}, \frac{\gamma \sigma_r \sqrt{2r}}{16\sigma_1 \sqrt{m+n+k}}\right\},
	$$
	and the following holds for all $0\le t \le s<T$ with $T\le  \floor{\frac{\log(\rho^{\frac{1-\gamma}{2(1+\gamma)}}/\rho)}{\log(1+\eta \gamma \sigma_r)}}$:
	\begin{enumerate}
		\item $\|U_t^\top U_t + J_t^\top J_t\| \le 7 \sigma_1$. \label{item 1}
		\item $\|V_t^\top V_t + K_t^\top K_t\| \le 7 \sigma_1$. \label{item 2}
		\item $\max\{\norm{J_t},\norm{K_t}\}\le (1+\gamma\eta \sigma_r)\max\{\norm{J_{t-1}}, \norm{K_{t-1}}\}\le \rho^{\frac{1-\gamma}{2(1+\gamma)}}\sqrt{\sigma_1}$, if $t \ge 1$. \label{item 3}
		\item $\lambda_r(P_t) \ge  \min\{(1-\eta \sigma_r)^2 \lambda_r(P_s)- 30\eta^2\sigma_1^3 -  \frac{\gamma\eta \sigma_r^2}{20},- 30\eta^2\sigma_1^3 -  \frac{\gamma\eta \sigma_r^2}{20}\} \ge -\frac{\gamma\sigma_r}{10}$, if $t \ge 1$. \label{item 4}
		\item\label{item 5} If $t \ge 1$, we have $\fnorm{B_{t}} \le \max \left\{ (1+\gamma\eta \sigma_r)\fnorm{B_{t-1}}, \frac{16\sigma_1 \sqrt{\sigma_1} \sqrt{m+n+k} (1+\gamma\eta \sigma_r)^{2t}\rho^2}{\gamma \sigma_r} \right\}$ and 
		$	\fnorm{B_{t}}	\le  (1+\gamma\eta \sigma_r)^{t} \rho \sqrt{2r}\sqrt{\sigma_1}
			\le \rho^{\frac{1-\gamma}{2(1+\gamma)}} \sqrt{2r}{\sqrt{\sigma_1}}\le \sqrt{\sigma_1}.$
	\end{enumerate}
Then the above items also hold for $t= s+1$. Consequently, by Proposition~\ref{thm: basecase} and induction, they hold for all $0\le t\le T$.
\end{proposition}

\begin{proof}
Assuming Item~\ref{item 1}--\ref{item 5} hold for all $t\in[0,s]$, we prove each item holds for $t=s+1$.
	
\textbf{Item~\ref{item 3}.} 
By~\eqref{eqn: updateofUVJKintro} and the induction hypothesis that $\norm{V_s^\top V_s+ K_s^\top K_s}\le 7\sigma_1$, We have
		\begin{align*}
		\norm{J_{s+1}} &\le \norm{J_s-\eta J_s(V_s^\top V_s + K_s^\top K_s)}+\norm{\eta \tilde \Sigma K_s}\\
		&\le \norm{J_s} + \gamma \eta \sigma_r\norm{K_s}\\
		&\le (1+\gamma\eta \sigma_r)\max\{\norm{J_s}, \norm{K_s}\}.
		\end{align*}
	By the same argument, we can show that $\norm{K_{s+1}} \le (1+\gamma\eta \sigma_r)\max\{\norm{J_s}, \norm{K_s}\}$, whence
	$$
	\max\{\norm{J_{s+1}},\norm{K_{s+1}}\} \le (1+\gamma\eta \sigma_r)\max\{\norm{J_s}, \norm{K_s}\}.
	$$
	By applying the above inequality inductively, we have 
	\begin{align*}
		\max\{\norm{J_{s+1}},\norm{K_{s+1}}\} &\le (1+\gamma\eta \sigma_r)^{s+1} \max\{\|J_0\|, \|K_0\|\}\\
		&\le (1+\gamma\eta \sigma_r)^T \max\{\|J_0\|, \|K_0\|\}\\
		&\le \rho^{\frac{1-\gamma}{2(1+\gamma)}}\sqrt{\sigma_1}.
	\end{align*}

\textbf{Item~\ref{item 4}.}  
By induction hypothesis and definition of $P_s$, we have $\lambda_r(\Sigma - A_sA_s^\top + B_sB_s^\top) \ge - \frac{\gamma\sigma_r}{10}$.
On the other hand, by induction hypothesis on Item~\ref{item 5}, we have $\|B_s\| \le \sqrt{\sigma_1}$. Therefore, we have $\|A_sA_s^\top\| \le 2\sigma_1$ and 
$$
\norm{P_s} \le \max\{ \norm{A_sA_s^\top},  \norm{\Sigma+ B_sB_s^\top}\} \le 2\sigma_1.
$$
By the updating formula of $P$ in~\eqref{eqn: updateofP}, we can write
\begin{align}\label{eqn: Pupdateappendix}
    P_{s+1} = (I - \eta(\Sigma -P_s))P_s(I-\eta(\Sigma-P_s))+E_s,
\end{align}
where 
\begin{align*}
	E_s &= -2\eta^2 \Sigma P_s\Sigma + \eta^2\Sigma P_s^2 + \eta^2 P_s^2 \Sigma- 2\eta B_sB_s^\top P_s - 2\eta P_sB_sB_s^\top - \eta(A_s+\eta P_sA_s)C_s^\top\\
 & \quad - \eta C_s(A_s+\eta P_sA_s)^\top - \eta^2 C_sC_s^\top +\eta (B_s+\eta P_sB_s)D_s^\top +\eta D_s(B_s+\eta P_sB_s)^\top \\
 & \quad + \eta^2 D_sD_s^\top.
\end{align*}
By $\|A_s\| \le \sqrt{2\sigma_1},\|B_s\| \le \sqrt{\sigma_1}$ and triangle inequality,  we have
\begin{align*}
	\|C_s\| &\le \|A_sB_s^\top B_s\| + \|B_s A_s^\top B_s\| + \|A_s \frac{K_s^\top K_s + J_s^\top J_s}{2}\|  + \|B_s\frac{K_s^\top K_s - J_s^\top J_s}{2}\|\\
	&\le 12 \rho^{\frac{1-\gamma}{1+\gamma}} r\sigma_1\sqrt{\sigma_1}.
\end{align*}
Similarly, we have $\|D_s\| \le 7\rho^{\frac{1-\gamma}{2(1+\gamma)}}\sqrt{2r} \sigma_1 \sqrt{\sigma_1}$.
By the bounds above, triangle inequality, and the upper bound on $\rho$, we have $$\|E_s\| \le 12 \eta^2 \sigma_1^3 +80\eta\rho^{\frac{1-\gamma}{1+\gamma}}r\sigma_1^2 \le 12\eta^2\sigma_1^3 + \frac{\gamma\eta \sigma_r^2}{20}.$$
Combining the above estimates on $\norm{E_s}$ and $\sigma_1(P)$, and the assumption on $\eta$, we see that we can apply Lemma~\ref{lem: sigmarP} to $P_{s+1}$ and obtain 
\begin{align*}
	\lambda_r(P_{s+1}) \ge \min\{(1-\eta \sigma_r)^2 \lambda_r(P_s)- 30\eta^2\sigma_1^3 - \frac{\gamma\eta \sigma_r^2}{20},- 30\eta^2\sigma_1^3 -  \frac{\gamma\eta \sigma_r^2}{20}\}.
\end{align*}
By the fact that $\lambda_r(P_0) \ge 0$, we have $\lambda_r(P_{s+1}) \ge -(30\eta^2 \sigma_1^3 +  \frac{\gamma\eta \sigma_r^2}{20}) \sum_{i=0}^{s}(1-\eta \sigma_r)^{2i}$. Hence,
\begin{align*}
	\lambda_r(P_{s+1}) &\ge  -(30\eta^2 \sigma_1^3 +  \frac{\gamma\eta \sigma_r^2}{20}) \sum_{i=0}^{\infty}(1-\eta \sigma_r)^{2i}\ge - \frac{30\eta^2 \sigma_1^3 + \frac{\gamma\eta \sigma_r^2}{20}}{\eta \sigma_r}\ge -\frac{\gamma\sigma_r}{10},
\end{align*}
where the last inequality follows from $\eta \le \frac{\gamma\sigma_r^2}{600\sigma_1^3}$.

\textbf{Item~\ref{item 5}.} 
Note that
\begin{align*}
	&\|B_{s+1}\|_F^2 \\
	&= \|B_s\|_F^2 - 2\eta \dotp{B_s B_s^\top, P_s } -\eta \fnorm{A_sB_s^\top - B_sA_s^\top}^2\\
	&\quad + \eta \dotp{A_s^\top B_s, K_s^\top K_s -J_s^\top J_s} - \eta \dotp{B_s^\top B_s,\frac{K_s^\top K_s + J_s^\top J_s}{2}}\\
	&\quad +\eta^2 \fnorm{-P_sB_s + (A_sB_s^\top - B_sA_s^\top)A_s- A_s\frac{K_s^\top K_s-J_s^\top J_s}{2} - B_s\frac{K_s^\top K_s+J_s^\top J_s}{2}}^2\\
	& \le \fnorm{B_s}^2 -2\eta \lambda_r(P_s)\fnorm{B_s}^2 + \eta \fnorm{B_s^\top B_s}\fnorm{K_s^\top K_s      + J_s^\top J_s}\\
		&\quad + \eta \fnorm{A_s^\top B_s}\fnorm{K_s^\top K_s - J_s^\top J_s}\\
		&\quad +\eta^2 \fnorm{-P_sB_s + (A_sB_s^\top - B_sA_s^\top)A_s- A_s\frac{K_s^\top K_s-J_s^\top J_s}{2} - B_s\frac{K_s^\top K_s+J_s^\top J_s}{2}}^2,
\end{align*}
where the equality follows from \eqref{eqn: updateofB} and brute force, and the inequality follows from Lemma~\ref{lem: traceineq}, Cauchy-Schwarz inequality and definition of $P_s$. 
By induction hypothesis, we have $\|K_s\|\le  \rho^{\frac{1-\gamma}{2(1+\gamma)}}\sqrt{\sigma_1}$ and $\|J_s\| \le  \rho^{\frac{1-\gamma}{2(1+\gamma)}}\sqrt{\sigma_1}$. Moreover, the rank of $K_s^\top K_s \pm J_s^\top J_s$ is at most $m+n+k$. Hence, 
$$
\fnorm{K_s^\top K_s \pm J_s^\top J_s} \le \sqrt{m+n+k} \norm{K_s^\top K_s\pm J_s^\top J_s} \le 2\sqrt{m+n+k}\rho^{\frac{1-\gamma}{1+\gamma}}\sigma_1\le {\frac{\gamma\sigma_r}{10}}.
$$
Furthermore, we have
\begin{align*}
	&\eta^2 \fnorm{-P_sB_s + (A_sB_s^\top - B_sA_s^\top)A_s- A_s\frac{K_s^\top K_s-J_s^\top J_s}{2} - B_s\frac{K_s^\top K_s+J_s^\top J_s}{2}}^2\\
	&\le 4\eta^2 \fnorm{P_sB_s}^2 + 4\eta^2 \fnorm{(A_sB_s^\top - B_sA_s^\top)A_s}^2+ 4\eta^2 \fnorm{ A_s\frac{K_s^\top K_s-J_s^\top J_s}{2}}^2\\
	& \quad + 4\eta^2 \fnorm{B_s\frac{K_s^\top K_s+J_s^\top J_s}{2}}^2\\
	&\le 48\eta^2 \sigma_1^2 \fnorm{B_s}^2 + 8\eta^2 \sigma_1^3(m+n+k)(1+\gamma \eta \sigma_r)^{4s} \rho^4 + 4\eta^2\sigma_1^2(1+\gamma \eta \sigma_r)^{4s} \rho^4\fnorm{B_s}^2\\
	&\le  (48\eta^2\sigma_1^2+ 4\eta^2\sigma_1^2(1+\gamma \eta \sigma_r)^{4s}\rho^4)\fnorm{B_s}^2 +8\eta^2 \sigma_1^3(m+n+k)(1+\gamma \eta \sigma_r)^{4s}\rho^4\\
	&\le \frac{\gamma\eta \sigma_r}{10} \fnorm{B_s}^2 + 8\eta^2 \sigma_1^3(m+n+k)(1+\gamma\eta \sigma_r)^{4s}\rho^4,
\end{align*}
where the second inequality follows from the induction hypothesis and the last inequality follows from the bound $\eta \le \frac{\gamma\sigma_r^2}{600\sigma_1^3}$. Combining with Item~\ref{item 4}, we have
\begin{align}
 \fnorm{B_{s+1}}^2 &\le \left(1+\frac{\gamma\eta \sigma_r}{2}\right)\fnorm{B_s}^2 + 4\eta \sigma_1\sqrt{\sigma_1} \sqrt{m+n+k}(1+\gamma\eta \sigma_r)^{2s} \rho^2 \fnorm{B_s} \notag \\
 &\quad + 8\eta^2 \sigma_1^3(m+n+k)(1+\gamma\eta \sigma_r)^{4s}\rho^4. \label{eqn: boundonB}
\end{align}
Consider two cases. Case (i):  $\fnorm{B_s}> \frac{16\sigma_1 \sqrt{\sigma_1} \sqrt{m+n+k} (1+\gamma\eta \sigma_r)^{2s}\rho^2}{\gamma \sigma_r}$. By~\eqref{eqn: boundonB}, we have 
	\begin{align*}
	\fnorm{B_{s+1}}^2 &\le  \left(1+\frac{\gamma\eta \sigma_r}{2}\right)\fnorm{B_s}^2 + \frac{\gamma\eta  \sigma_r}{4} \fnorm{B_s}^2+ \frac{\gamma^2\eta^2 \sigma_r^2 \fnorm{B_s}^2}{32} 
		\le (1+\gamma\eta \sigma_r)^2\fnorm{B_s}^2.
	\end{align*}
Hence, $\fnorm{B_{s+1}}\le \left(1+\gamma{\eta \sigma_r}{}\right)\fnorm{B_s}.$
Case (ii): $\fnorm{B_s}\le \frac{16\sigma_1 \sqrt{\sigma_1} \sqrt{m+n+k} (1+\gamma\eta \sigma_r)^{2s}\rho^2}{\gamma \sigma_r}$. By~\eqref{eqn: boundonB}, we have 
	\begin{align*}
		\fnorm{B_{s+1}}^2 &\le  \left(1+\frac{\gamma\eta \sigma_r}{2}\right)\frac{16^2 \sigma_1^3 (m+n+k) (1+\gamma \eta \sigma_r)^{4s}\rho^4}{\gamma^2\sigma_r^2}\\
		&\quad + \frac{4\gamma\eta \sigma_r}{16}\frac{16^2  \sigma_1^3 (m+n+k) (1+\gamma\eta \sigma_r)^{4s}\rho^4}{\gamma^2 \sigma_r^2}\\
		&\quad + \frac{8\gamma^2\eta^2\sigma_r^2}{16^2}\frac{16^2  \sigma_1^3 (m+n+k) (1+\gamma\eta \sigma_r)^{4s}\rho^4}{\gamma^2 \sigma_r^2}\\
		&\le \frac{16^2  \sigma_1^3 (m+n+k) (1+\gamma\eta \sigma_r)^{4(s+1)}\rho^4}{\gamma^2 \sigma_r^2}.
	\end{align*}
As a result, $\fnorm{B_{s+1}} \le \max \left\{ (1+\gamma\eta \sigma_r)\fnorm{B_{s}}, \frac{16\sigma_1 \sqrt{\sigma_1} \sqrt{m+n+k} (1+\gamma\eta \sigma_r)^{2(s+1)}\rho^2}{\gamma \sigma_r} \right\}.$

For the rest of Item~\ref{item 5}, we note that for any $0<t\le s+1$,
\begin{align*}
	&\max\left\{\fnorm{B_t}, \frac{16\sigma_1 \sqrt{\sigma_1} \sqrt{m+n+k} (1+\gamma\eta \sigma_r)^{2t}\rho^2}{\gamma \sigma_r}\right\}\\
	& \le (1+\gamma\eta \sigma_r)\max\left\{\fnorm{B_{t-1}},  \frac{16\sigma_1 \sqrt{\sigma_1} \sqrt{m+n+k} (1+\gamma\eta \sigma_r)^{2(t-1)}\rho^2}{\gamma \sigma_r}\right\}\\
	&\le (1+\gamma\eta \sigma_r)^t\max\left\{\fnorm{B_0},  \frac{16\sigma_1 \sqrt{\sigma_1} \sqrt{m+n+k}\rho^2}{\gamma \sigma_r}\right\}\\
	&\le (1+\gamma\eta \sigma_r)^t \rho \sqrt{2r}\sqrt{\sigma_1},
\end{align*}
where the last inequality follows from the base case that $\|B_0\|_F \le \rho \sqrt{2r}\sqrt{\sigma_1}$ and the assumption that $\rho \le \frac{\gamma \sigma_r \sqrt{2r}}{16\sigma_1 \sqrt{m+n+k}}$.
As a result,
$$
\fnorm{B_{s+1}}\le (1+\gamma\eta \sigma_r)^{s+1} \rho \sqrt{2r}\sqrt{\sigma_1} \le (1+\gamma\eta \sigma_r)^T \rho \sqrt{2r}\sqrt{\sigma_1}\le \rho^{\frac{1-\gamma}{2(1+\gamma)}} \sqrt{2r}{\sqrt{\sigma_1}}.
$$

\textbf{Items~\ref{item 1} and~\ref{item 2}.}
By the proof of Item~\ref{item 4}, we have 
$$
\lambda_r(\Sigma - A_{s+1}A_{s+1}^\top + B_{s+1}B_{s+1}^\top)=\lambda_r(P_{s+1}) \ge -\frac{\gamma\sigma_r}{10}.
$$
Therefore, we have $\|A_{s+1}\|\le \sqrt{\norm{A_{s+1}A_{s+1}^\top}}\le \sqrt{2\sigma_1}$. On the other hand, by the proof of Item~\ref{item 5}, we have $\|B_{s+1}\|\le \sqrt{\sigma_1}$. Therefore, we have
$$
\|U_{s+1}^\top U_{s+1} + V_{s+1}^\top V_{s+1}\| = \|2(A_{s+1}^\top A_{s+1} + B_{s+1}^\top B_{s+1})\| \le 6\sigma_1.
$$ 
This implies that $\|U_{s+1}^\top U_{s+1}\| \le 6\sigma_1$ and $\| V_{s+1}^\top V_{s+1}\|\le 6\sigma_1$. The result follows from triangle inequality and Item~\ref{item 3}.
\end{proof}
Next, we will show that $\sigma_r(A_t)$ will increase geometrically until it is at least $\sqrt{\frac{\sigma_r}{2}}$.

\begin{proposition}
\label{thm: sigmarAgetbigger}
Suppose that the conditions of Proposition~\ref{thm: inductivestep} holds. In addition, we assume $\eta \le  \frac{(1-\gamma)\sigma_r}{20\sigma_1^2}$. Then for any $0\le t \le T$, we have
	\[
	\sigma_r(A_t)\ge \min \bigg\{ \left(1+\frac{\gamma+1}{2}\eta \sigma_r\right)^t \sigma_r(A_0), \sqrt{\frac{1-\gamma}{24} \sigma_r} \bigg\}.
	\] 
	In particular, for $T_1 \!=\! \bigg\lfloor \frac{\log\! \Big(\frac{12(m+n+k)\sqrt{\frac{1-\gamma}{24}\sigma_r}}{c_\rho \size \sqrt{\sigma_1}}\Big)}{\log(1+\frac{1+\gamma}{2}\eta \sigma_r)} \bigg\rfloor \!+ 1$ and all $t\!\in[T_1, T]$,  we have 
	$\sigma_r(A_t) \ge \sqrt{\frac{1-\gamma}{24} \sigma_r}$.
\end{proposition}
\begin{proof}
	We prove it by induction. Clearly, the inequality holds for $t=0$. Suppose the result holds for $0 \le t<T$. 
	By the updating formula of $A$ in~\eqref{eqn: updateofA}, we have 
	\begin{align}\label{eqn: updateofAappendix}
		A_{t+1} = A_t +\eta(\Sigma -A_tA_t^\top)A_t +\eta E_t,
	\end{align}
where $E_t = B_tB_t^\top A_t - (A_tB_t^\top -B_tA_t^\top)B_t - A_t \frac{K_t^\top K_t +J_t^\top J_t}{2}-B_t\frac{K_t^\top K_t -J_t^\top J_t}{2}.$
Note that
\begin{align}
	\|E_t\| &\le  16(1+\gamma \eta \sigma_r)^{2t}\rho^2r\sigma_1\sqrt{\sigma_1}\label{eqn: boundonEappendix}\\
	&\le \frac{192(1+\gamma \eta \sigma_r)^t \rho (m+n+k)r \sigma_1}{c_\rho} \cdot \bigg(1+\frac{\gamma+1}{2}\eta \sigma_r\bigg)^t \frac{c_\rho \rho \sqrt{\sigma_1}}{12(m+n+k)}\notag \\
	&\le \frac{192 \rho^{\frac{1-\gamma}{2(1+\gamma)}}(m+n+k)r \sigma_1}{c_\rho}\cdot \bigg(1+\frac{\gamma+1}{2}\eta \sigma_r\bigg)^t \frac{c_\rho \rho \sqrt{\sigma_1}}{12(m+n+k)}\notag\\
	&\le \frac{1-\gamma}{6} \sigma_r\Bigg(1+\frac{\gamma+1}{2}\eta \sigma_r\bigg)^t \sigma_r(A_0)\notag,
\end{align}
where the first inequality follows from Proposition~\ref{thm: inductivestep} and triangle inequality, the second inequality from $\gamma \le \frac{\gamma+1}{2}$, the third inequality from $t\le T \le \Big\lfloor \frac{\log\big(\rho^{\frac{1-\gamma}{2(1+\gamma)}}/\rho\big)}{\log(1+\eta \gamma \sigma_r)} \Big\rfloor$, and the last inequality from the assumption  that $\rho \le\Big(\frac{(1-\gamma) c_\rho \sigma_r}{1200(m+n+k)r\sigma_1}\Big)^{\frac{2(1+\gamma)}{1-\gamma}}$.
On the other hand, 
\begin{align*}
    \|E_t\| &\le  16(1+\gamma \eta \sigma_r)^{2t}\rho^2r\sigma_1\sqrt{\sigma_1}
    \le 16\rho^{\frac{1-\gamma}{(1+\gamma)}}r\sigma_1\sqrt{\sigma_1}
    \le \frac{1-\gamma}{6} \sigma_r \sqrt{\frac{1-\gamma}{24}\sigma_r},
\end{align*}
where the last inequality follows from the assumption  that $\rho \le\Big(\frac{(1-\gamma) c_\rho \sigma_r}{1200(m+n+k)r\sigma_1}\Big)^{\frac{2(1+\gamma)}{1-\gamma}}$.
Combining both bounds on $\|E_t\|$, we have 
$$
\|E_t\| \le \frac{1-\gamma}{6}\sigma_r \min \left\{\left(1+\frac{\gamma+1}{2}\eta \sigma_r\right)^t \sigma_r(A_0), \sqrt{\frac{1-\gamma}{24}\sigma_r} \right\} \le  \frac{1-\gamma}{6}\sigma_r \sigma_r(A_t).
$$
Therefore, it holds that
\begin{align*}
	\sigma_r(A_{t+1}) &\ge \sigma_r(A_t +\eta(\Sigma -A_tA_t^\top)A_t)- \eta \norm{E_t}\\
	 &\ge (1-4\eta^2\sigma_1^2 )(1+\eta \sigma_r)\sigma_r(A_t)(1-\eta\sigma_r^2(A_t)) - \frac{1-\gamma}{6}\eta \sigma_r\sigma_r(A_t)\\
	 &\ge \left(1+ \frac{\gamma+3}{4} \eta \sigma_r\right)\sigma_r(A_t)(1- \eta \sigma_r^2(A_t)) - \frac{1-\gamma}{6}\eta \sigma_r\sigma_r(A_t),
\end{align*}
where the second inequality follows from Lemma~\ref{lem: sigmargetlarger} and the last inequality follows from the assumption that $\eta \le \min\left\{\frac{\gamma\sigma_r^2}{600\sigma_1^3}, \frac{(1-\gamma)\sigma_r}{20\sigma_1^2}\right\}$.
We consider two cases.
\begin{enumerate}
	\item $\sigma_r(A_t) \ge \sqrt{\frac{1-\gamma}{16}\sigma_r}$. We have
	\begin{align*}
		\sigma_r(A_{t+1}) 
	\ge (1 - \eta \sigma_1^2 (A_t)- \frac{1-\gamma}{6}\eta \sigma_r)\sigma_r(A_t) \sqrt{\frac{1-\gamma}{16}\sigma_r}
		\ge \sqrt{\frac{1-\gamma}{24}\sigma_r},
\end{align*}
where the last inequality follows from $\sigma_1(A_t) \le \sqrt{2\sigma_1}$ and $\eta \le \frac{1}{100 \sigma_1}$.
\item $\left(1+\frac{\gamma+1}{2} \eta \sigma_r\right)^t\sigma_r(A_0) \le \sigma_r(A_t) <\sqrt{\frac{1-\gamma}{16}\sigma_r}.$ Note that
\begin{align*}
	\sigma_r(A_{t+1}) & \ge (1+\frac{\gamma+3}{4}\eta \sigma_r - \frac{1-\gamma}{16} \eta  \sigma_r - \frac{1-\gamma}{16}\eta^2 \sigma_r^2 )\sigma_r(A_t)- \frac{1-\gamma}{6}\eta \sigma_r\sigma_r(A_t)\\
	&\ge (1+\frac{\gamma+1}{2} \sigma_r)\sigma_r(A_t)\\
	&\ge \min \left\{\left(1+\frac{\gamma +1 }{2}\eta \sigma_r\right)^{t+1} \sigma_r(A_0), \sqrt{\frac{1-\gamma}{24}\sigma_r} \right\}. 
\end{align*}
\end{enumerate}
Combining these two cases completes the induction step.
\end{proof}
After $\sigma_r(A_t)$ exceeds $\sqrt{\frac{1-\gamma}{24}\sigma_r}$, we show that $\sigma_r(A_t)$ increases to $0.8\sqrt{\sigma_r}$ at a slower rate.
\begin{proposition}\label{thm: growto0.9}
	Suppose that the conditions of Proposition~\ref{thm: inductivestep} holds. Let $T_1$ be the same from Proposition~\ref{thm: sigmarAgetbigger}. Then for any $T_1 \le t \le T$, we have 
	$$
	\sigma_r(A_t) \ge \min \bigg\{\left(1+ 0.1\eta \sigma_r\right)^{t-T_1} \sqrt{\frac{1-\gamma}{24}\sigma_r}, 0.8\sqrt{\sigma_r}\bigg\}.
	$$
	In particular,  for $T_2 = \bigg\lfloor \frac{\log\left(\sqrt{\frac{24}{1-\gamma}}\right)}{\log(1+0.1\eta \sigma_r)} \bigg\rfloor +1$ and all $T_1+T_2 \le t \le T$, we have $
	\sigma_r(A_t) \ge 0.8 \sqrt{\sigma_r}$.
\end{proposition}
\begin{proof}
	We prove it by induction. By Proposition~\ref{thm: sigmarAgetbigger}, the inequality holds for $t=T_1$. Suppose the result holds for $T_1 \le t <T$. By the same argument of Proposition~\ref{thm: sigmarAgetbigger}, \eqref{eqn: updateofAappendix} and \eqref{eqn: boundonEappendix} hold. It follows that
    \begin{equation}
        \|E_t\|\le 16(1+\gamma \eta \sigma_r)^{2t} \rho^2 r\sigma_1 \sqrt{\sigma_1}
    	\le 16 \rho^{\frac{1-\gamma}{1+\gamma}} r\sigma_1 \sqrt{\sigma_1}
    	\le 0.01\sigma_r \sqrt{\frac{1-\gamma}{24}\sigma_r}
    	\le 0.01\sigma_r \sigma_r(A_t).
    \end{equation}
Applying Lemma~\ref{lem: sigmargetlarger}, we have 
\begin{align*}
	\sigma_r(A_{t+1}) &\ge \sigma_r(A_t +\eta(\Sigma -A_tA_t^\top)A_t)- \eta \|E_t\| \\
	&\ge  (1-4\eta^2\sigma_1^2 )(1+\eta \sigma_r)\sigma_r(A_t)(1-\eta\sigma_r^2(A_t)) - 0.01\eta \sigma_r\sigma_r(A_t)\\
	&\ge (1+0.99 \eta \sigma_r) \sigma_r(A_t)(1-\eta \sigma_r^2(A_t)) - 0.01\eta \sigma_r \sigma_r(A_t)
\end{align*}
We consider two cases.
\begin{enumerate}
	\item $\sigma_r(A_t) \ge 0.9\sqrt{\sigma_r}$. We have
	\begin{align*}
		\sigma_r(A_{t+1})
		\ge (1 - \eta \sigma_1^2 (A_t)- 0.01\eta \sigma_r)\sigma_r(A_t)
		\ge 0.8\sqrt{\sigma_r},
	\end{align*}
where the last inequality follows from $\sigma_1(A_t) \le \sqrt{2\sigma_1}$ and $\eta \le \frac{1}{100 \sigma_1}$.
	\item $\left(1+ 0.1\eta \sigma_r\right)^{t-T_1} \sqrt{\frac{1-\gamma}{24}\sigma_r} \le \sigma_r(A_t) <0.9\sqrt{\sigma_r}.$ We have
	\begin{align*}
		\sigma_r(A_{t+1}) & \ge (1+0.99\eta \sigma_r)(1-0.81\eta \sigma_r)\sigma_r(A_t)- 0.01\eta \sigma_r\sigma_r(A_t)\\
		&\ge (1+0.1\eta \sigma_r)\sigma_r(A_t)\\
		&\ge \min \bigg\{ \left(1+ 0.1\eta \sigma_r\right)^{t+1-T_1} \sqrt{\frac{1-\gamma}{24}\sigma_r}, 0.8\sqrt{\sigma_r} \bigg\}.
	\end{align*}
\end{enumerate}
Combining the two cases completes the induction step.
\end{proof}

After $\sigma_r(A_t)$ exceeds $0.8\sqrt{\sigma_r}$, we show that $\|P_t\|$ decreases geometrically.
\begin{proposition}\label{thm: pgetsmaller}
	Let $T_1$ and $T_2$ be defined as Proposition~\ref{thm: sigmarAgetbigger} and Proposition~\ref{thm: growto0.9}. Suppose that conditions of Proposition~\ref{thm: inductivestep} hold.  For any $ T_1+T_2< t \le T$,  we have 
	$$
	\|P_t\|\le2\left(1-\frac{3\eta \sigma_r}{2}\right)^{t-T_1-T_2}\sigma_1 + \frac{80\rho^{\frac{1-\gamma}{1+\gamma}}r\sigma_1^2}{\sigma_r}.
	$$
    Let $T_3= \bigg\lfloor \frac{\log \big( \rho^{\frac{1-\gamma}{2(1+\gamma)}}/3\big)}{\log\left(1-\frac{3}{2}\eta \sigma_r \right)} \bigg\rfloor+1$. Then for any $T_1+T_2+T_3\le t \le T$, we have $\|P_t\| \le \rho^{\frac{1-\gamma}{2(1+\gamma)}}\sigma_1$.
	 
\end{proposition}
\begin{proof}
	Recall the updating formula~\eqref{eqn: Pupdateappendix}.
 By the same argument as the proof of the item ~\ref{item 4} in Proposition~\ref{thm: inductivestep}, for $T_1+T_2 \le t < T$, we have the bound $\|E_t\|\le  6\eta^2 \sigma_1^2 \|P_t\|+80\eta\rho^{\frac{1-\gamma}{1+\gamma}}r\sigma_1^2 $. On the other hand,
\begin{align*}
	(I - \eta(\Sigma -P_t))P_t(I-\eta(\Sigma-P_t)) &= (I - \eta A_tA_t^\top + \eta B_tB_t^\top)P_t(I - \eta A_tA_t^\top + \eta B_tB_t^\top).
\end{align*}
By Proposition~\ref{thm: inductivestep} and Proposition~\ref{thm: growto0.9}, we have $\sigma_r(A_t) \ge 0.8\sqrt{\sigma_r}$ and $\|B_t\|\le \fnorm{B_t}\le 0.1\sqrt{\sigma_r}$. It follows that 
$$
\|I - \eta A_tA_t^\top + \eta B_tB_t^\top\| \le 1- 0.8\eta \sigma_r + 0.01\eta \sigma_r\le 1-0.79\eta \sigma_r , 
$$
hence
$$
\|(I - \eta(\Sigma -P_t))P_t(I-\eta(\Sigma-P_t))\| \le (1-0.79\eta \sigma_r)^2 \norm{P_t}.
$$
As a result, we have
\begin{align*}
	\|P_{t+1}\| &\le (1-0.79\eta \sigma_r)^2 \norm{P_t} + 6\eta^2 \sigma_1^2\|P_t\| +80\eta\rho^{\frac{1-\gamma}{1+\gamma}}r\sigma_1^2 \\
	&\le (1-\frac{3\eta \sigma_r}{2})\|P_t\| +80\eta\rho^{\frac{1-\gamma}{1+\gamma}}r\sigma_1^2 .
\end{align*}
Applying the above inequality inductively, we have
\begin{align*}
	\|P_{t+1}\| &\le \left(1-\frac{3\eta \sigma_r}{2}\right)^{t+1-T_1-T_2} \|P_{T_1+T_2}\| + \sum_{i=0}^{\infty} \left(1-\frac{3\eta \sigma_r}{2}\right)^i \cdot 80\eta\rho^{\frac{1-\gamma}{1+\gamma}}r\sigma_1^2\\
	&\le 2\left(1-\frac{3\eta \sigma_r}{2}\right)^{t+1-T_1-T_2}\sigma_1 + \frac{80\rho^{\frac{1-\gamma}{1+\gamma}}r\sigma_1^2}{\sigma_r}.
\end{align*}
The result follows if we shift the index of $P$ by one. The statement for $T_3$ follows from direct calculation and the upper bound on $\rho$.
\end{proof}

\section{Auxiliary lemmas and additional literature review}
\label{section: auxiliary_lemmas}
\subsection{Auxiliary lemmas}
In this section, we state several technical lemmas that are used in the proof of Theorem~\ref{thm: mainthmonesig}. These lemmas are either direct consequence of standard results or follow from simple algebraic calculation. 

\begin{lemma}[{\cite[Theorem 1.1]{rudelson2009smallest}}]
\label{thm: smallest s.v. initializationsupp}
	Let $A$ be an $r \times k$ matrix with $r \le k$ and i.i.d.\ Gaussian entries with distribution $N(0,1)$. Then for every $\sizei>0$ we have with probability at least $1-(C\sizei)^{k-r+1} - \exp(-k/C)$ that 
	$$
	\sigma_{\min} (A) \ge \sizei\left(\sqrt{k} - \sqrt{r-1}\right)\ge \frac{\sizei}{2\sqrt{k}},
	$$
	where $C$ is a universal constant.
\end{lemma}

\begin{lemma}[{\cite[Example 2.32, Exercise 5.14]{wainwright2019high}}]
\label{thm: largest s.v. initializationsupp}
	Let $W$ be an $n \times N$ matrix with i.i.d. Gaussian entries with distribution $N(0,1)$. Then there exists a universal constant $C$ such that 
	$$
	\sigma_{\max}(W) \le 3\sqrt{n+N}
	$$
	with probability at least $1- 2e^{-\frac{n+N}{2}}$.
\end{lemma}

\begin{lemma}[Initialization Quality]\label{thm: initializationqualitysupple}
	Suppose that we sample $\tilde F_0 \in \RR^{m\times k}, \tilde G_0 \in \RR^{n\times k}$ with i.i.d.\ $N(0,\sigma_1)$ entries. For any fixed $\size>0$,  if we take $F_0 = \frac{ \rho}{3\sqrt{m+n+k}} \tilde F_0$ and $G_0 = \frac{ \rho}{3\sqrt{m+n+k}}  \tilde G_0$, then with probability at least $1-(C\sizei)^{k-r+1} - C\exp(-k/C)$, we have 
	\begin{enumerate}
		\item $\norm{F_0} \le \size \sqrt{\sigma_1}$.
		\item $\norm{G_0} \le \size \sqrt{\sigma_1}$.
		\item $\sigma_r(A_0) \ge  \frac{\sizei \size \sqrt{\sigma_1}}{12(m+n+k)}$. 
	\end{enumerate}
\end{lemma}

\begin{proof}
	By Proposition~\ref{thm: largest s.v. initializationsupp}, with probability at least  $1- 2e^{-\frac{m+k}{2}}-2e^{-\frac{n+k}{2}}$, we have $\|\tilde F_0\|\le 3\sqrt{m+k}\sqrt{\sigma_1}$ and $\|\tilde G_0\|\le 3\sqrt{n+k}\sqrt{\sigma_1}$. Hence, we have $\|F_0\|\le \rho\sqrt{\sigma_1}$ and $\|G_0\|\le \rho \sqrt{\sigma_1}$. On the other hand, $A_0 = \frac{U_0+V_0}{2}$ has i.i.d. $N(0, \frac{\sigma_1}{2})$ entries. By Proposition~\ref{thm: smallest s.v. initializationsupp}, with probability at least $1-(C\sizei)^{k-r+1} - \exp(-k/C)$, we have 
	$$
	\sigma_r(A_0) =\frac{\rho}{3\sqrt{m+n+k}}\sigma_r(\tilde A_0) \ge \frac{\sizei\rho\sqrt{\sigma_1}}{12(m+n+k)}.
	$$
	Combining, all items hold with probability at least 
	$$1-(C\sizei)^{k-r+1} - \exp(-k/C)-2e^{-\frac{m+k}{2}}-2e^{-\frac{n+k}{2}}.$$ 
	By increasing $C$ if necessary, we obtain the desired result.
\end{proof}

\begin{lemma}\label{lem: S(I-SS) svd}
	Let $S$ be an $r\times k$ matrix with $r\le k$. If $\|S\| \le \sqrt{\frac{1}{3\eta}}$, then the largest and smallest singular values of $S(I-\eta S^\top S)$ are $\sigma_1(S) - \eta \sigma_1^3(S)$ and $\sigma_r(S) - \eta \sigma_r^3(S)$, respectively.
\end{lemma}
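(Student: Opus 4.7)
The plan is to use the SVD of $S$ to diagonalize the expression $S(I-\eta S^\top S)$, and then check that the map $x\mapsto x-\eta x^3$ is monotone on the relevant interval so that the ordering of singular values is preserved.

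Write the SVD of $S$ as $S=U\Sigma V^\top$, where $U\in\RR^{r\times r}$ is orthogonal, $V\in\RR^{k\times k}$ is orthogonal, and $\Sigma\in\RR^{r\times k}$ has the singular values $\sigma_1(S)\ge\cdots\ge\sigma_r(S)\ge 0$ on its main diagonal and zeros elsewhere. Then $S^\top S=V(\Sigma^\top\Sigma)V^\top$, and in particular $I-\eta S^\top S=V(I-\eta\Sigma^\top\Sigma)V^\top$. Substituting gives
\[
S(I-\eta S^\top S)=U\,\Sigma(I-\eta\Sigma^\top\Sigma)\,V^\top,
\]
and $\Sigma(I-\eta\Sigma^\top\Sigma)\in\RR^{r\times k}$ is the diagonal matrix with entries $\sigma_i(S)(1-\eta\sigma_i(S)^2)=\sigma_i(S)-\eta\sigma_i(S)^3$ for $i=1,\dots,r$. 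Thus the singular values of $S(I-\eta S^\top S)$ are precisely $\{|\sigma_i(S)-\eta\sigma_i(S)^3|\}_{i=1}^r$.

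The second step is to identify the ordering. The hypothesis $\|S\|\le\sqrt{1/(3\eta)}$ implies $\sigma_i(S)^2\le 1/(3\eta)$ for all $i$, so $\eta\sigma_i(S)^2\le 1/3<1$ and each quantity $\sigma_i(S)-\eta\sigma_i(S)^3$ is nonnegative (in particular the absolute values can be dropped). It remains to verify monotonicity: the scalar function $f(x)=x-\eta x^3$ has derivative $f'(x)=1-3\eta x^2\ge 0$ on $[0,\sqrt{1/(3\eta)}]$, so $f$ is nondecreasing there. Applying $f$ to the ordered chain $\sigma_1(S)\ge\sigma_2(S)\ge\cdots\ge\sigma_r(S)$ preserves order, so the largest entry is $\sigma_1(S)-\eta\sigma_1(S)^3$ and the smallest is $\sigma_r(S)-\eta\sigma_r(S)^3$.

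There is no real obstacle here; the only subtle point is that without the bound $\|S\|\le\sqrt{1/(3\eta)}$ the map $f$ need not be monotone, and one could have e.g.\ $|\sigma_1-\eta\sigma_1^3|<|\sigma_r-\eta\sigma_r^3|$ if some $\sigma_i$ exceeded $\sqrt{1/(3\eta)}$. The assumption on $\|S\|$ is exactly what rules this out and makes the identification of the extreme singular values immediate.
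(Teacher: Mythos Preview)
Your proof is correct and follows essentially the same approach as the paper: write the SVD $S=U\Sigma V^\top$, observe that $S(I-\eta S^\top S)=U(\Sigma-\eta\Sigma^3)V^\top$ is itself an SVD, and use the monotonicity of $x\mapsto x-\eta x^3$ on $[0,\sqrt{1/(3\eta)}]$ (guaranteed by $\|S\|\le\sqrt{1/(3\eta)}$) to identify the extreme singular values. Your version is in fact slightly more careful in noting that the diagonal entries are nonnegative so no absolute values are needed.
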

\begin{proof}
	Let $U_S \Sigma_S V_S^\top$ be the SVD of $S$. Simple algebra shows that
	\begin{equation}
		S\left(I - \eta S^\top S\right) = U_S\left(\Sigma_S - \eta\Sigma_S^3\right) V_S^\top.
	\end{equation}
	This is exactly the SVD of $S\left(I - \eta S^\top S\right)$. Let $g(x) = x - \eta x^3$. Ty taking derivative, $g$ is monotone increasing in interval $[-\sqrt{\frac{1}{3\eta}}, \sqrt{\frac{1}{3\eta}}]$. Since the singular values of $S(I-\eta S^\top S)$ are exactly the singular values of $S$ mapped by $g$, the result follows.
\end{proof}
\begin{lemma}\label{lem: norm of inverse}
	Let $S$ be an $r\times r$ matrix such that $\norm{S} <1$. Then $I +S$ is invertible and 
	$
		\norm{(I+S)^{-1}} \le \frac{1}{1-\norm{S}}.
	$
\end{lemma}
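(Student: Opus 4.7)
The plan is to establish invertibility of $I+S$ first and then obtain the norm bound by a direct reverse triangle inequality argument, which is cleaner than the Neumann series route but essentially equivalent.

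For invertibility, I would show that $I+S$ has trivial kernel: if $(I+S)x = 0$ for some $x \in \RR^r$, then $x = -Sx$, hence $\|x\| = \|Sx\| \le \|S\|\,\|x\|$. Since $\|S\| < 1$, this forces $\|x\|(1-\|S\|) \le 0$, i.e.\ $x = 0$. Being a square matrix with trivial kernel, $I+S$ is invertible.

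For the norm bound, I would fix an arbitrary $y \in \RR^r$ and let $x = (I+S)^{-1}y$, so that $y = (I+S)x = x + Sx$. By the reverse triangle inequality,
\begin{equation*}
\|y\| = \|x + Sx\| \ge \|x\| - \|Sx\| \ge \|x\| - \|S\|\,\|x\| = (1-\|S\|)\|x\|.
\end{equation*}
Since $1 - \|S\| > 0$, this yields $\|(I+S)^{-1}y\| = \|x\| \le \|y\|/(1-\|S\|)$. Taking the supremum over unit vectors $y$ gives the desired operator-norm bound.

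There is no real obstacle here: the only subtlety is making sure one uses the reverse triangle inequality in the correct direction, and that the hypothesis $\|S\|<1$ is used both to guarantee invertibility and to ensure the denominator $1-\|S\|$ is positive. Alternatively, one could invoke the standard Neumann series $(I+S)^{-1} = \sum_{j \ge 0}(-S)^j$, which converges in operator norm because $\sum_j \|S\|^j = (1-\|S\|)^{-1} < \infty$, and then bound $\|(I+S)^{-1}\| \le \sum_{j \ge 0}\|S\|^j = (1-\|S\|)^{-1}$; but the reverse-triangle-inequality proof above is more elementary and avoids any discussion of series convergence.
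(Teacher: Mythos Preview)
Your proof is correct. The paper takes the Neumann series route that you mention as an alternative: it defines $T = \sum_{i\ge 0}(-1)^i S^i$, observes that this converges and equals $(I+S)^{-1}$, and then bounds $\|T\|$ term by term via subadditivity and submultiplicativity of the operator norm. Your primary argument via the reverse triangle inequality is genuinely different in that it never constructs the inverse explicitly or invokes series convergence; it only uses that a square matrix with trivial kernel is invertible and then bounds $\|(I+S)^{-1}y\|$ directly. The trade-off is that the paper's proof gives the inverse as an explicit series (occasionally useful elsewhere), whereas your argument is shorter and avoids any limiting process. Either is perfectly adequate here.
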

\begin{proof}
	Since $\norm{S}<1$, the matrix $T= \sum_{i=0}^{\infty}(-1)^iS^i$ is well defined and indeed $T$ is the inverse of $I+S$. By continuity, subadditivity and submultiplicativity of operator norm,
	\begin{align}
		\norm{(I+S)^{-1}} = \norm{T} \le \sum_{i=0}^{\infty} \norm{S^{i}} \le \sum_{i=0}^{\infty} \norm{S}^i = \frac{1}{1-\norm{S}}.
	\end{align}
\end{proof}
\begin{lemma}\label{lem: sigmar ineq}
	Let $S$ be an $r\times r$ matrix and $T$ be an $r\times k$ matrix. Then
	$
		\sigma_r(ST) \le \norm{S} \sigma_r(T).
	$
\end{lemma}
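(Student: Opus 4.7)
The plan is to deduce the inequality directly from the Courant--Fischer--Weyl variational characterization of singular values combined with submultiplicativity of the operator norm. Specifically, for any matrix $M \in \mathbb{R}^{r \times k}$ with $r \le k$, I would invoke the identity
\[
\sigma_r(M) \;=\; \max_{\substack{V \subseteq \mathbb{R}^k \\ \dim V = r}} \;\min_{\substack{x \in V \\ \|x\| = 1}} \|Mx\|,
\]
which expresses the smallest singular value as a max-min over $r$-dimensional subspaces of the domain. Note that $\sigma_r(ST)$ is well-defined because $ST \in \mathbb{R}^{r\times k}$ has exactly $r$ singular values (the problem implicitly assumes $k \ge r$, as $\sigma_r(T)$ appears on the right-hand side).

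Applying the formula to $M = ST$, I would first use the pointwise bound $\|STx\| \le \|S\|\, \|Tx\|$, valid for every $x$ by submultiplicativity of the operator norm. Then for any fixed $r$-dimensional subspace $V \subseteq \mathbb{R}^k$, taking the minimum over unit vectors $x \in V$ on both sides yields
\[
\min_{\substack{x\in V\\\|x\|=1}} \|STx\| \;\le\; \|S\| \cdot \min_{\substack{x\in V\\\|x\|=1}} \|Tx\|.
\]
Finally, taking the maximum over all such $V$ on both sides and invoking the variational formula once more for $ST$ and for $T$ delivers $\sigma_r(ST) \le \|S\|\,\sigma_r(T)$.

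There is essentially no serious obstacle; the argument is a one-line application of min-max combined with submultiplicativity. An alternative route would be to reduce to the positive-semidefinite case via $\sigma_r^2(ST) = \lambda_r(S T T^\top S^\top)$ and apply the standard bound $\lambda_i(B A B^\top) \le \|B\|^2 \lambda_i(A)$ with $A = T T^\top$ and $B = S$, but the variational proof above is more direct and avoids invoking additional spectral inequalities.
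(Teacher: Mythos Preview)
Your proposal is correct and is essentially identical to the paper's own proof: both invoke the Courant--Fischer variational formula for $\sigma_r$, bound $\|STx\|\le\|S\|\,\|Tx\|$ pointwise, and then take the inf/max over $r$-dimensional subspaces of $\mathbb{R}^k$.
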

\begin{proof}
	For any $r \times k$ matrix $R$, the variational expression of $r$-th singular value is 
	\begin{equation}
		\sigma_r(R) = \sup_{\substack{\text{subspace } M \subset \RR^k\\ \dim(M)=r}}\inf_{\substack{x \in M\\ x\neq 0}} \frac{\norm{Rx}}{\norm{x}}. 
	\end{equation}
	Applying this variational result twice, we obtain
	\begin{align}
		\sigma_r(ST)&= \sup_{\substack{\text{subspace } M \subset \RR^k\\ \dim(M)=r}}\inf_{\substack{x \in M\\ x\neq 0}} \frac{\norm{STx}}{\norm{x}}\\
		&\le  \sup_{\substack{\text{subspace } M \subset \RR^k\\ \dim(M)=r}}\inf_{\substack{x \in M\\ x\neq 0}} \frac{\norm{S}\norm{Tx}}{\norm{x}}\\
		&= \norm{S}\sigma_r(T).
	\end{align}
\end{proof}

\begin{lemma}\label{lem: sigmargetlarger}
	Let $S$ be an $r \times k$ matrix with $r \le k$. Suppose that $\sigma_r(S)>0$. Let $\Lambda =\diag \{\sigma_1,\ldots, \sigma_r\} \in \RR^{r\times r}$ be a diagonal matrix. Suppose that  $\eta \|\Lambda -SS^\top\| <\frac{1}{2}$, $\|S\|\le \sqrt{\frac{1}{3\eta}}$, $2\eta^2 \norm{\Lambda S S^\top}<1$,  and 
	$
	S_+ = S + \eta(\Lambda - SS^\top)S.
	$
	Then we have $\sigma_r(S_+) \ge (1-2\eta^2 \norm{\Lambda S S^\top})(1+\eta \sigma_r)\sigma_r(S)(1-\eta\sigma_r^2(S))$.
\end{lemma}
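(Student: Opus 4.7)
\textbf{Proof plan for Lemma \ref{lem: sigmargetlarger}.}

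The plan is to identify a three-factor multiplicative decomposition of $S_+$ so that each of the three quantities on the right-hand side can be matched to the smallest singular value of one of the factors. Concretely, I would establish the algebraic identity
$$
S_+ \;=\; (I + \eta \Lambda)\,(I + \eta^2 E)\,S(I - \eta S^\top S),
\qquad
E := (I+\eta\Lambda)^{-1}\Lambda\, S S^\top (I - \eta SS^\top)^{-1}.
$$
To verify the identity, start from the direct expansion $S_+ = (I+\eta\Lambda)S(I-\eta S^\top S) + \eta^2 \Lambda SS^\top S$. Using the elementary commutation $S(I-\eta S^\top S) = (I-\eta SS^\top)S$ (which is just $S\cdot S^\top S = SS^\top\cdot S$), one can rewrite $S = (I-\eta SS^\top)^{-1} S(I-\eta S^\top S)$; the inverse exists because $\|\eta SS^\top\| \leq 1/3$ by the hypothesis $\|S\|\leq \sqrt{1/(3\eta)}$. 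Substituting this into the $\eta^2\Lambda SS^\top S$ term and factoring $(I+\eta\Lambda)$ out on the left produces the claimed form.

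With the identity in hand, the bound follows from the basic inequality $\sigma_r(A_1 A_2 A_3) \geq \sigma_r(A_1)\sigma_r(A_2)\sigma_r(A_3)$ when $A_1,A_2$ are $r\times r$ square (invertible) and $A_3$ is $r\times k$; this is a direct consequence of Lemma~\ref{lem: sigmar ineq} applied to $\sigma_r(A_3) = \sigma_r(A_2^{-1}A_1^{-1}\cdot A_1A_2A_3)$, combined with $\|A_2^{-1}A_1^{-1}\|\le 1/(\sigma_r(A_1)\sigma_r(A_2))$. The outer factor then contributes $\sigma_r(I+\eta\Lambda) = 1 + \eta\sigma_r$ since $I+\eta\Lambda$ is diagonal with positive entries and $\sigma_r$ is the smallest diagonal of $\Lambda$. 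The right-most factor contributes $\sigma_r\bigl(S(I-\eta S^\top S)\bigr) = \sigma_r(S)(1 - \eta\sigma_r^2(S))$ directly from Lemma~\ref{lem: S(I-SS) svd}, whose $\|S\|\leq \sqrt{1/(3\eta)}$ hypothesis is precisely one of our assumptions.

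The middle factor is the only new ingredient. I would use the one-sided Weyl bound $\sigma_r(I + \eta^2 E) \geq 1 - \eta^2\|E\|$ (from the triangle inequality on $\|(I+\eta^2 E)x\|\geq\|x\|-\eta^2\|E\|\|x\|$) together with submultiplicativity,
$$
\|E\| \le \|(I+\eta\Lambda)^{-1}\| \cdot \|\Lambda SS^\top\| \cdot \|(I-\eta SS^\top)^{-1}\| \le 1 \cdot \|\Lambda SS^\top\| \cdot \tfrac{3}{2},
$$
where $\|(I+\eta\Lambda)^{-1}\|\leq 1$ because $\Lambda\succeq 0$ and $\|(I-\eta SS^\top)^{-1}\|\leq 3/2$ follows from Lemma~\ref{lem: norm of inverse} applied with $\|\eta SS^\top\|\leq 1/3$. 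The assumption $2\eta^2\|\Lambda SS^\top\|<1$ then keeps $1 - \tfrac{3}{2}\eta^2\|\Lambda SS^\top\|$ positive, and since $\tfrac{3}{2}\leq 2$ we get $\sigma_r(I+\eta^2E)\geq 1 - 2\eta^2\|\Lambda SS^\top\|$. Multiplying the three bounds yields the lemma.

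The main obstacle is finding this multiplicative factorization. A naive additive split $S_+ = (I+\eta\Lambda)S(I-\eta S^\top S) + \eta^2\Lambda SS^\top S$ followed by $\sigma_r(A+B)\geq\sigma_r(A) - \|B\|$ bounds the perturbation by $\eta^2\|\Lambda SS^\top\|\cdot\|S\|$, which carries a spurious $\|S\|$ factor and fails to match the claim whenever $S$ is poorly conditioned. The trick---commuting $(I-\eta SS^\top)^{-1}$ through $S$ to convert the additive error into a multiplicative identity-perturbation---is precisely what eliminates this $\|S\|$ factor and replaces it by the harmless $\|(I-\eta SS^\top)^{-1}\|\leq 3/2$.
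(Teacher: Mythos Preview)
Your proof is correct and close in spirit to the paper's, but the key substitution you use is different. Both arguments start from the same additive expansion
\[
S_+ = (I+\eta\Lambda)\,S(I-\eta S^\top S) + \eta^2 \Lambda SS^\top S,
\]
and both convert the $\eta^2$-term into a multiplicative perturbation rather than applying a crude Weyl bound. The paper does this by inverting the \emph{full update}: since $S_+ = (I+\eta(\Lambda-SS^\top))S$, it writes $S = (I+\eta(\Lambda-SS^\top))^{-1}S_+$, substitutes into the $\eta^2$-term, and obtains the implicit equation
\[
\bigl(I - \eta^2 \Lambda SS^\top (I+\eta(\Lambda-SS^\top))^{-1}\bigr)S_+ = (I+\eta\Lambda)\,S(I-\eta S^\top S),
\]
from which $\sigma_r(S_+)$ is isolated via Lemma~\ref{lem: sigmar ineq} and Lemma~\ref{lem: norm of inverse}, using $\eta\|\Lambda-SS^\top\|<\tfrac12$ to bound the inverse by~$2$. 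You instead invert only the \emph{quadratic piece}: using the commutation $S(I-\eta S^\top S)=(I-\eta SS^\top)S$ you write $S=(I-\eta SS^\top)^{-1}S(I-\eta S^\top S)$ and obtain a clean explicit three-factor decomposition of $S_+$. Your route is arguably tidier---it avoids the implicit passage through $S_+$ and, incidentally, never invokes the hypothesis $\eta\|\Lambda-SS^\top\|<\tfrac12$, relying only on $\|S\|\le\sqrt{1/(3\eta)}$ to control $(I-\eta SS^\top)^{-1}$. Both reach the identical final bound.
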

\begin{proof}
	Since $\eta \|\Lambda -SS^\top\| <1$, matrix $I + \eta (\Lambda -SS^\top)$ is invertible. Hence, we can write 
	$$
	S= (I + \eta (\Lambda -SS^\top))^{-1} S_+.
	$$
	On the other hand, by definition of $S_+$, we have
	\begin{align*}
	S_+ &= (I+\eta \Lambda) S(I- \eta S^\top S) + \eta^2 \Lambda SS^\top S\\
	&= (I+\eta \Lambda) S(I- \eta S^\top S) + \eta^2 \Lambda SS^\top(I + \eta  (\Lambda -SS^\top))^{-1} S_+.
	\end{align*}
Therefore, 
$$
(I- \eta^2 \Lambda SS^\top(I + \eta (\Lambda -SS^\top))^{-1})S_+ = (I+\eta \Lambda) S(I- \eta S^\top S) 
$$
By Lemma~\ref{lem: S(I-SS) svd}, we have 
\begin{align*}
	\sigma_r((I+\eta \Lambda)S(I-\eta S^\top S)) &\ge (1+\eta \sigma_r) \sigma_r(S(I-\eta S^\top S))\\
	& = (1+\eta \sigma_r)\sigma_r(S)(1-\eta \sigma_r(S)^2).
\end{align*}
On the other hand, by Lemma~\ref{lem: norm of inverse},
\begin{align*}
	&\sigma_r((I- \eta^2 \Lambda SS^\top(I + \eta (\Lambda -SS^\top))^{-1})S_+)\\
	&\le \|I- \eta^2 \Lambda SS^\top(I + \eta (\Lambda -SS^\top))^{-1}\| \sigma_r(S_+)\\
	&\le \Big(1+ \eta^2 \frac{\norm{\Lambda S S^\top}}{1-\eta \norm{\Lambda -SS^\top}}\Big) \sigma_r(S_+)\\
	&\le \Big(1+2\eta^2 \norm{\Lambda S S^\top}\Big)\sigma_r(S_+)
\end{align*}
Combining, we have
\begin{align*}
	\sigma_r(S_+) &\ge  \frac{(1+\eta \sigma_r)\sigma_r(S)(1-\eta\sigma_r^2(S))}{1+2\eta^2 \norm{\Lambda S S^\top}}\\
	& \ge \Big(1-2\eta^2 \norm{\Lambda S S^\top}\Big)(1+\eta \sigma_r)\sigma_r(S)(1-\eta\sigma_r^2(S)).
\end{align*}
\end{proof}
\begin{lemma}\label{lem: traceineq}
    Let $B \in \RR^{r\times k}$ be a real matrix and $P \in \RR^{r \times r}$ a symmetric matrix. We have 
    $
    \dotp{BB^\top, P} \ge \lambda_r(P) \|B\|_F^2.
    $
\end{lemma}
\begin{proof}
    Let the eigenvalue decomposition of $P$ be $P = U^\top \Lambda U$, where $U \in \RR^{r\times r}$ is an orthogonal matrix and $\Lambda = \diag\{\lambda_1, \ldots, \lambda_r\}$ consists of eigenvalues of $P$.  Let $B' = UB$, we have
    \begin{align*}
        \dotp{BB^\top, P} &= \text{trace}(BB^\top P)\\
        &= \text{trace}(BB^\top U^\top \Lambda U)\\
        &= \text{trace}(B'^\top \Lambda B')\\
        &\ge \text{trace}(B'^\top (\Lambda- \lambda_r I) B') + \lambda_r \text{trace}(B'^\top I B')\\
        &\ge \lambda_r \text{trace}(B'^\top I B')\\
        &= \lambda_r\|B'\|_F^2 \\
        &= \lambda_r \|B\|_F^2,
    \end{align*}
    where the last inequality follows from the fact that $B'^\top (\Lambda- \lambda_r I) B'$ is positive semi-definite.
\end{proof}
\begin{lemma}\label{lem: T1T2T3lessT}
    Under the assumption of Theorem~\ref{thm: mainthmgeneralrestated}, we have 
    $$
    \frac{T_1+T_2+T_3}{T} \le 1- \frac{(3-2\gamma)(1-\gamma)}{6(3\gamma+1)}.
    $$
\end{lemma}
\begin{proof}
    For simplicity, we omit the flooring and ceiling operations and assume 
	$$T_1 =\frac{\log \bigg(\frac{12(m+n+k)\sqrt{\frac{1-\gamma}{24}}\sqrt{\sigma_r}}{c_\rho \size \sqrt{\sigma_1}}\bigg)}{\log(1+\frac{1+\gamma}{2}\eta \sigma_r)}, \qquad T_2 = \frac{\log\left(\sqrt{\frac{24}{1-\gamma}}\right)}{\log(1+0.1\eta \sigma_r)},$$
	and 
	$$
	T_3= \frac{\log \left( \rho^{\frac{1-\gamma}{2(1+\gamma)}}/3\right)}{\log\left(1-\frac{3}{2}\eta \sigma_r \right)}, \qquad T= \frac{\log(\rho^{\frac{1-\gamma}{2(1+\gamma)}}/\rho)}{\log(1+\eta \gamma \sigma_r)} = \frac{\frac{3\gamma+1}{2(1+\gamma)}\log\left(\frac{1}{\rho}\right)}{\log(1+\eta \gamma \sigma_r)}.
	$$
	Using the bound 
	$$
	\rho \le \min\Bigg\{\frac{1}{3}, \frac{1-\gamma}{24}, \frac{c_\rho \sqrt{\sigma_1}}{12(m+n+k) \sqrt{\frac{1-\gamma}{24}}\sqrt{\sigma_r} } \Bigg\}^{\frac{180\gamma(1+\gamma)}{(3-2\gamma)(1-\gamma)}},
	$$
	we have 
	\begin{align}
	    \log \bigg(\frac{12(m+n+k)\sqrt{\frac{1-\gamma}{24}}\sqrt{\sigma_r}}{c_\rho \sqrt{\sigma_1}}\bigg) \le  \frac{(3-2\gamma)(1-\gamma)}{72\gamma}\log\left(\frac{1}{\size} \right) \label{eqn: forT1};
	\end{align}
	\begin{align}
	    \log\left( \sqrt{\frac{24}{1-\gamma}}\right) \le \frac{(3-2\gamma)(1-\gamma)}{360\gamma(1+\gamma)} \log \left(\frac{1}{\rho}\right); \label{eqn: forT2}
	\end{align}
	and 
	\begin{align}
	\log(3) \le \frac{(3-2\gamma)(1-\gamma)}{24\gamma(1+\gamma)} \log \left(\frac{1}{\rho}\right).
	    \label{eqn: forT3}
	\end{align}
	We will prove the following three inequalities: 
	\begin{enumerate}
	    \item $\frac{T_1}{T} \le \frac{4\gamma}{3\gamma+1} + \frac{(3-2\gamma)(1-\gamma)}{18(3\gamma+1)}$;
	    \item $\frac{T_2}{T} \le \frac{(3-2\gamma)(1-\gamma)}{18(3\gamma+1)}$;
	    \item $\frac{T_3}{T} \le \frac{2\gamma(1-\gamma)}{3(3\gamma +1)}+\frac{(3-2\gamma)(1-\gamma)}{18(3\gamma+1)}.$
	\end{enumerate}  
	The result will follow if we add these inequalities together. To prove the above inequalities, we make frequent use of the following two facts from Calculus: (i) For any $x \in (-1,1), \frac{x}{1+x} \le \log x \le x$. 
		(ii) For any $0<a<b$ and $x>0$, $\frac{\log(1+a\eta \sigma_r)}{\log(1+b\eta \sigma_r)}\le \frac{a}{b}$.
	
First, we have
\begin{align*}
	\frac{T_1}{T}&= \frac{\log (1+\gamma \eta \sigma_r)}{\log(1+\frac{1+\gamma}{2}\eta \sigma_r)} \frac{\log \bigg(\frac{12(m+n+k)\sqrt{\frac{1-\gamma}{24}}\sqrt{\sigma_r}}{c_\rho \size \sqrt{\sigma_1}}\bigg)}{\frac{3\gamma+1}{2(1+\gamma)}\log\left(\frac{1}{\rho}\right)} \\
	&\le \frac{2\gamma}{1+\gamma}\frac{\log \bigg(\frac{12(m+n+k)\sqrt{\frac{1-\gamma}{24}}\sqrt{\sigma_r}}{c_\rho \size \sqrt{\sigma_1}}\bigg)}{\frac{3\gamma+1}{2(1+\gamma)}\log\left(\frac{1}{\rho}\right)}\\
	&\le \frac{4\gamma}{3\gamma +1} \bigg( 1+ \frac{(3-2\gamma)(1-\gamma)}{72\gamma}\bigg)\\
	&= \frac{4\gamma}{3\gamma+1} + \frac{(3-2\gamma)(1-\gamma)}{18(3\gamma+1)},
\end{align*}
where the first inequality follows from fact 2, and the second inequality follows from~\eqref{eqn: forT1}.
Next, we have 
\begin{align*}
	\frac{T_2}{T} &= \frac{\log(1+\gamma \eta \sigma_r)}{\log(1+0.1\eta \sigma_r)}\frac{ \log\left(\sqrt{\frac{24}{1-\gamma}}\right)}{\frac{3\gamma+1}{2(1+\gamma)}\log\left(\frac{1}{\rho}\right)}\notag\\
	&\le \frac{20\gamma(1+\gamma)}{3\gamma +1} \frac{\log\left(\sqrt{\frac{24}{1-\gamma}}\right)}{\log\left(\frac{1}{\rho}\right)}\\
	&\le \frac{(3-2\gamma)(1-\gamma)}{18(3\gamma+1)},
\end{align*}
where the first inequality follows from fact 2 and the second inequality follows from~\eqref{eqn: forT2}.
Finally, we have
\begin{align*}
	\frac{T_3}{T} & = \frac{ \log(1+\eta \gamma \sigma_r)}{-\log\left(1-\frac{3}{2}\eta \sigma_r \right)} \frac{\log \left( 3/\rho^{\frac{1-\gamma}{2(1+\gamma)}}\right)}{\frac{3\gamma+1}{2(1+\gamma)}\log\left(\frac{1}{\rho}\right)}\notag\\
	& \le \frac{4\gamma(1+\gamma)}{3(1+3\gamma)} \frac{\log(3) + \frac{1-\gamma}{2(1+\gamma)} \log\left(\frac{1}{\rho}\right)}{ \log\left(\frac{1}{\rho}\right)}\\
	&\le \frac{2\gamma(1-\gamma)}{3(3\gamma +1)}+\frac{(3-2\gamma)(1-\gamma)}{18(3\gamma+1)},
\end{align*}
where the first inequality follows from fact 1 and the second inequality follows from~\eqref{eqn: forT3}.
Therefore, the result follows.
\end{proof}
\begin{lemma}[{\cite[Lemma 3.3]{ye2021global}}]
\label{lem: sigmarP}
	Suppose $P, \Sigma \in \RR^{r\times r}$ are two symmetric matrices, $\eta>0$, and $P' = (I-\eta(\Sigma - P))P(I-\eta(\Sigma - P))$. Suppose $\sigma_1(P)\le 2\sigma_1$ and $\sigma_r I \preceq \Sigma \preceq \sigma_1 I$. Then, for all $ \beta \in (0,1)$ and $\eta \le \frac{\beta}{8\sigma_1}$, it holds that 
	\begin{align*}
		\lambda_r(P') \ge \begin{cases}
			(1-\eta \sigma_r)^2 \lambda_r(P) - \frac{8+6\beta}{1-\beta} \eta^2 \sigma_1^3 & \text{if $\lambda_r(P)<0$}\\
			0 & \text{if $\lambda_r(P) \ge 0$}.
		\end{cases}
	\end{align*}
\end{lemma}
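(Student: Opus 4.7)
The plan splits on the sign of $\lambda_r(P)$. The easy case is $\lambda_r(P)\ge 0$, so $P\succeq 0$. Since $M:=I-\eta(\Sigma-P)$ is symmetric (both $\Sigma$ and $P$ are) and $P$ admits a symmetric square root $P^{1/2}$, I would just factor
\[
P' = MPM = (MP^{1/2})(MP^{1/2})^\top \succeq 0,
\]
which immediately gives $\lambda_r(P') \ge 0$ as claimed.

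For $\lambda_r(P)<0$, I would write $M = N + \eta P$ with $N:=I-\eta\Sigma$, note that $N$ is symmetric positive definite with eigenvalues in $[1-\eta\sigma_1,\,1-\eta\sigma_r]$ (using $\eta\sigma_1 \le \beta/8 < 1$), and expand
\[
P' \;=\; NPN \;+\; \eta\bigl(NP^2 + P^2N\bigr) \;+\; \eta^2 P^3.
\]
The point of this split is that $NPN$ is a congruence transformation of $P$ by the symmetric nonsingular matrix $N$, so Ostrowski's theorem applies: there exists $\theta\in[(1-\eta\sigma_1)^2,(1-\eta\sigma_r)^2]$ with $\lambda_r(NPN) = \theta\,\lambda_r(P)$. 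Because $\lambda_r(P) < 0$, the largest permissible $\theta$ gives the most negative product, so
\[
\lambda_r(NPN) \;\ge\; (1-\eta\sigma_r)^2\,\lambda_r(P),
\]
which recovers exactly the main term on the right-hand side of the lemma.

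What remains is to control the two perturbations. I would rewrite them as $\eta(NP^2+P^2N)+\eta^2 P^3 = 2\eta P^2 \;-\; \eta^2(\Sigma P^2 + P^2\Sigma) + \eta^2 P^3$. Here $2\eta P^2\succeq 0$ contributes nonnegatively to the smallest eigenvalue, while the $\eta^2$ pieces are uniformly $O(\eta^2\sigma_1^3)$ via the assumptions $\|P\|\le 2\sigma_1$ and $\|\Sigma\|\le \sigma_1$. A single application of Weyl's minimum-subadditivity inequality then yields the target shape $\lambda_r(P') \ge (1-\eta\sigma_r)^2\lambda_r(P) - C\,\eta^2\sigma_1^3$.

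The main obstacle is sharpening the crude constant this produces (around $C=16$) down to the advertised $\frac{8+6\beta}{1-\beta}$. The $1/(1-\beta)$ shape is consistent with absorbing an operator-norm bound on $(I-\eta(\Sigma-P))^{-1}$ through its Neumann series, using $\eta\|\Sigma-P\|\le 3\eta\sigma_1\le 3\beta/8<1$; the numerator $8+6\beta$ suggests a regrouping in which the nonnegative piece $2\eta P^2$ is used to partially cancel $\eta^2 P^3$ on the negative eigenspace of $P$, with the smallness $\eta\le \beta/(8\sigma_1)$ providing the cushion. I would chase these cancellations through a Rayleigh-quotient argument evaluated at the unit eigenvector of $P'$ realizing $\lambda_r(P')$, instead of invoking Weyl globally, so that only vectors projected onto $P$'s negative eigenspace contribute the $\eta^2\sigma_1^3$ penalty; this bookkeeping should deliver the precise $\beta$-dependent constant.
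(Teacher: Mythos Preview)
The paper does not prove this lemma; it is quoted verbatim as \citet[Lemma 3.3]{ye2021global} and used as a black box. So there is no paper proof to compare against, and the question reduces to whether your argument stands on its own.

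Your $\lambda_r(P)\ge 0$ case is clean and correct. For $\lambda_r(P)<0$, the decomposition $M=N+\eta P$, the Ostrowski step on $NPN$, and the rewriting of the remainder as $2\eta P^2 - \eta^2(\Sigma P^2 + P^2\Sigma) + \eta^2 P^3$ are all fine, and they do yield
\[
\lambda_r(P') \;\ge\; (1-\eta\sigma_r)^2\lambda_r(P) \;-\; 16\,\eta^2\sigma_1^3.
\]
The gap is exactly where you flag it: the constant $16$ does not dominate $\frac{8+6\beta}{1-\beta}$ for small $\beta$ (equality occurs at $\beta=4/11$), so as written you have proved a strictly weaker statement in the regime $\beta<4/11$. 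Your sketch for recovering the sharp constant --- a Rayleigh-quotient argument at the minimizing eigenvector, using $2\eta P^2$ to absorb part of $\eta^2 P^3$ on the negative eigenspace, and a Neumann-series bound on $M^{-1}$ to produce the $1/(1-\beta)$ factor --- is plausible but not carried out; in particular you have not shown how the numerator $8+6\beta$ emerges, and the cancellation you allude to does not obviously survive the fact that the eigenvector of $P'$ need not align with an eigenvector of $P$. If the downstream application only needs some fixed $\beta\ge 4/11$ (or any $O(1)$ constant), your argument already suffices; to match the lemma as stated for all $\beta\in(0,1)$, the final paragraph needs to be made rigorous.
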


\subsection{Additional literature review}
\label{sec: more_lit_review}

The literature on gradient descent for matrix factorization is vast; see  \cite{chen2018harnessing,chi2019nonconvex} for a comprehensive survey of the literature, most of which focuses on the exact parametrization case $k=r$ (where $r$ is the target rank or the rank of certain ground truth matrix $X_\natural$) with the regularizer $\fnorm{F^\top F - G^\top G}^2$ that balances the magnitude of $F$ and $G$. Below we review recent progress on overparametrization for matrix factorization without additional regularizers. A summary of the results can be found in Table \ref{tab: comparison}. 

\begin{table}[t]
    \centering
    \begin{tabular}{|c|c|c|c|c|c|c|}
    \hline 
 & Asymmetric & Range of $k$&  Gap & $r$-SVD & Speed & Cold Start\\
 \hline 
\cite{li2018algorithmic}& \xmark & $k=m$ & 1 & \cmark & fast &\xmark \\
\cite{zhuo2021computational} &  \xmark & $k = \mathcal{O}(r)$ & $\geq 0.99$ & \xmark & n.a. &\xmark\\
 \cite{stoger2021small}& \xmark & $k\geq r$ & 1& \cmark & fast & \cmark\\
\cite{ye2021global} & \cmark & $k =r$ & 1& \cmark  & slow &\cmark\\
\cite{fan2020understanding} & \cmark & $k = 2m+2n$ & $\geq\frac{1}{2}$ & \xmark & n.a.  & \xmark\\
Our work  & \cmark & $k\geq r$ & $>0$ & \cmark & fast & \cmark\\\hline
    \end{tabular}
    \caption{A comparison of the settings and results from existing work on overparametrization matrix factorization. The column \emph{Asymmetric} summarizes whether the result applies to a general asymmetric matrix $X$. The column \emph{Range of $k$} shows the value of the rank parametrization $k$ to which the result applies. The column \emph{Gap} shows the requirement on the relative singular value gap $\delta = \frac{\sigma_r-\sigma_{r+1}}{\sigma_{r}}$; note that $\delta=1$ means $X$ is exactly rank-$r$. The column \emph{$r$-SVD} shows whether the analysis proves that gradient descent \eqref{eq: GD} converges arbitrarily close to $X_r$ (\cmark) or with an error bounded away from zero (\xmark). 
    In the column \emph{Speed}, we label the work as \emph{fast} if it shows that gradient descent converges to $X_r$ in a number of iteration that is logarithmic in the inverse of the final error $1/\epsilon$ and the dimension $m,n$ of $X$; we label it as \emph{slow} if the iteration complexity is polynomial in $1/\epsilon,m,n$; we write \emph{n.a.} if the error cannot be made arbitrarily small.  
    The last column \emph{Cold Start} shows whether the result allows the initial relative signal~\eqref{eq: initialiSignal} to be small (\cmark) or requires the signal to be larger than a universal constant (\xmark).} 
    \label{tab: comparison}
\end{table}

\paragraph{Matrix sensing with positive semidefinite matrices} 

A majority of existing theoretical work on overparametrization $(k>r)$ focuses on the matrix sensing problem: (approximately) recovering a positive semidefinite (psd) and rank-$r$ ground truth matrix $X_\natural$ from some linear measurement $b =\mathcal{A}(X_\natural)+e$, where $e$ is the noise vector and the linear constraint map $\mathcal{A}$ satisfies the so-called Restricted Isometry Property \cite[Definition 3.1]{recht2010guaranteed} (when $\mathcal{A}$ is the identity map, then this problem becomes the matrix factorization problem). These works analyze the gradient descent dynamics applied to the problem $\min \|\mathcal{A}(FF^\top)-b\|^2$.
In the noiseless ($e=0$) setting with an exactly rank-$r$ $X_\natural$, the pioneer work~\cite{li2018algorithmic} and subsequent improvement in~\cite{stoger2021small} show gradient descent recovers $X_\natural$ using random small initialization and arbitrary rank overparametrization $k$. In the noisy and approximately rank-$r$ setting, the work in~\cite{zhuo2021computational} shows that for arbitrary rank overparametrization, spectral initialization followed by gradient descent approximately recovers $X_\natural$ with a sublinear rate of convergence. However, their error bound for the gradient descent output with respect to $X_\natural$ scales with the overparametrization $k$, i.e., the algorithm overfits the noise under overparametrization. In particular, with $k=m$ ($m$ being the dimension of $X_\natural$), this error could be worse than that of the trivial estimator $0$. A similar limitation, that the error and/or sample complexity scales with $k$, also appears in earlier work on landscape analysis \cite{zhang2021sharp} as well as the recent work on preconditioned gradient descent \cite{zhang2021preconditioned} and subgradient methods \cite{ma2021sign,ding2021rank}. Existing results along this line all focus on positive semidefinite ground truth $X_\natural$ whose eigengap between the $r$-th and $(r+1)$-th eigenvalues is significant. In comparison, our results apply to a general asymmetric $X$ with arbitrary singular values, and our error bound depends only on the initialization size and stopping time but not $k$.

\paragraph{Matrix factorization and general asymmetric $X$} 

The work~\cite{ye2021global} also provides recovery guarantees for vanilla gradient descent \eqref{eq: GD} with random small initialization. Their result only applies to the setting where the matrix $X$ has exactly rank $r$ and $k=r$, i.e., with exact  parametrization. Moreover, their choice of stepsize is conservative and consequently their iteration complexity scales proportionally with the matrix dimension $m+n$. In comparison, we allow for significantly larger stepsizes and establish almost dimension-free iteration complexity bounds. 
To achieve an $\epsilon$ accuracy, the result in~\cite{ye2021global} requires $O\left(\frac{(m+n)^2 \sigma_1^4 r^4}{\sigma_r^4} \log\left(\frac{\sigma_r}{\epsilon}\right)\right)
$ iterations, while our main theorem only requires $O\left(\frac{\sigma_1^3}{\sigma_r^3}\log\left(\frac{\sigma_r}{\epsilon}\right)\right )$ iterations.
The work in~\cite{fan2020understanding} considers a wide range of statistical problems with a \emph{symmetric} ground truth matrix $X_\natural$, and shows that $X_\natural$ can be recovered with near optimal statistical errors using gradient descent for the objective function \eqref{eq: objective} with  $FG^\top$ replaced by  $FF^\top -GG^\top$.
While one may translate their results to the asymmetric setting via a dilation argument, it is feasible to do so  only under the specific rank parametrization $k =2m+2n$. This strong restriction on $k$ allows for a decoupling of the dynamics of different singular values, which is essential to their analysis. While this decoupled setting provides intuition for the general setting (as we elaborate in Section~\ref{section: earlyStoppingAndSmallIntialization}), the same analysis no longer applies for other values of $k$, e.g., $k=2m+2n-1$, in which case the singular values do not decouple. Moreover, a smaller value of $k$ leads to the cold start issue, as discussed in footnote~\ref{ft: sig}.

\paragraph{Deep matrix factorization} 

The work in~\cite{chou2020gradient} studies the deep matrix factorization problem of factorizing a given matrix $X$ into a product of multiple matrices. While on a high level their results deliver a message similar to our work---namely, gradient descent sequentially approaches the principal components of $X$---the technical details differ significantly. In particular, they results only apply to symmetric $X$ and guarantee recovery of the positive semidefinite part of $X$. Their analysis relies crucially on the assumption $k=m=n$, a specific identity initialization scheme and the resulting decoupled dynamics, which do not hold in the general setting as discussed above. A major contribution of our work lies in handling the entanglement of singular values resulted from general overparametrization, asymmetry, and random initialization.  

\subsection{The general symmetric setting}

In this section, we show that the arguments in Section~\ref{section: earlyStoppingAndSmallIntialization} can be generalized to the setting where the observed matrix $X$ is a general symmetric positive semidefinite (p.s.d.) matrix.

In particular, we illustrate the behavior of gradient descent with small random initialization in the setting with 
(i) $m=n=k$, and (ii) $X\in\mathbb{R}^{m\times m}$ is a p.s.d.\ matrix with eigen decomposition $X= U \Sigma U^\top$, where $\Sigma = \diag(\lambda_1,\lambda_2,\ldots,\lambda_m)$. Moreover, we assume that the $r$-th eigengap exists, i.e. $ \lambda_{r+1}\le \gamma\lambda_r$ for some $\gamma \in [0,1)$. The following argument works for any $\gamma <1$; for ease of presentation, we assume $\gamma = \frac{1}{10}$. We consider the natural objective function $f(F) = \frac{1}{4}\fnorm{FF^\top - X}^2$ and the associated gradient descent dynamic\footnote{One can obtain the same dynamic by using the initialization $F_0=G_0$ in \eqref{eq: GD}.}
$F_{t+1} = F_t - \eta(F_tF_t^\top -X)F_t$, with initialization $F_0 = \rho \sqrt{\lambda_1} I$ for some small $\rho>0$.
Let $\tilde F_t := U^\top F_t U$. To understand how $F_t F_t^\top$ approaches $X_r$, we can equivalently study how $\tilde F_t\tilde F_t^\top$ approaches $\Sigma_r$. By simple algebra, we have
\begin{align}\label{eqn: fanexampleupdatesupp}
\tilde F_{t+1} = \tilde F_t - \eta (\tilde F_t \tilde F_t^\top - \Sigma) \tilde F_t.
\end{align}
Since $\tilde F_0 = F_0$ is diagonal, it is easy to see see that $\tilde F_t$ is  \emph{diagonal} for all $t \ge 0$, and the $i$-th diagonal element in $\tilde F_t$, denoted by $f_{i,t}$,  is updated as
\begin{equation}\label{eq: eigenvalueDynamicsupp}
f_{i,t+1} = f_{i,t}(1+\eta \lambda_i - \eta f_{i,t}^2).
\end{equation}
Thus, the dynamics of all the eigenvalues decouples and can be analyzed separately. In particularly,  simple algebra shows that for $1\le i \le r$, (i)  when $f_{i,t} < \sqrt{\frac{\lambda_i}{2}}$ ,  $f_{i,t}$  increases geometrically by a factor of $1+\eta \lambda_i - \eta f_{i,t}^2 \ge 1+ \frac{\eta \lambda_r}{2} $, i.e., $f_{i,t+1} \ge \left( 1+ \frac{\eta \lambda_r}{2}\right) f_{i,t}$, and (ii) when $f_{i,t}\ge \sqrt{\frac{\lambda_i}{2}}$ holds, $f_{i,t}$ converges to $\sqrt{\lambda_i}$ geometrically because
\begin{align*}
|f_{i,t+1}-\sqrt{\lambda_i}| &= |f_{i,t}-\sqrt{\lambda_i}||1-\eta f_{i,t}( f_{i,t}+\sqrt{\lambda_i})|\\
&\le (1-\frac{\eta \lambda_i}{2})|f_{i,t}-\sqrt{\lambda_i}|
\le (1-\frac{\eta \lambda_r}{2})|f_{i,t}-\sqrt{\lambda_i}|.
\end{align*}
In summary, the first $r$ diagonal elements of $\tilde F_t$ will first increase geometrically by a factor of (at least) $1+\frac{\eta \lambda_r}{2}$, and then converge to $\sqrt{\lambda_i}$ geometrically by a factor of  $1-\frac{\eta\lambda_r}{2}$.

What makes a difference, however, is that for $ i\ge r+1$,  $f_{i,t}$ converges at an exponentially slower rate than the first $r$ diagonal elements. In particular, assuming the step size $\eta$ is sufficiently small, for $i \ge r+1 $, we can show that $f_{i,t}$ is always nonnegative and satisfies
\begin{align}
f_{i,t+1} &= f_{i,t}(1+\eta \lambda_i - \eta f_{i,t}^2) \label{eqn: updateofeigenintro}\\
&\le (1+\eta \lambda_i)f_{i,t}\notag
\le (1+\frac{\eta \lambda_r}{10})f_{i,t} \notag.
\end{align} 
Note that the growth factor $1+\frac{\eta \lambda_r}{10}$ is smaller than $1+\frac{\eta \lambda_r}{2}$, the growth factor for $f_{i,t}, i\le r.$ Consequently, we conclude that 
\textit{larger singular value converges (exponentially) faster.}
This property implies that $F_tF_t^\top$ approaches $X_1,X_2,X_3\ldots$ sequentially for a positive semidefinite $X$ with distinct singular values, since we can repeat the above argument for each $r = 1,2,3,\ldots$

\end{document}